\theoremstyle{plain}
\newtheorem{theorem}{Theorem}[section]
\newtheorem{proposition}[theorem]{Proposition}
\newtheorem{lemma}[theorem]{Lemma}
\theoremstyle{definition}
\newtheorem{definition}[theorem]{Definition}
\newtheorem{assumption}[theorem]{Assumption}
\theoremstyle{remark}
\newtheorem{remark}[theorem]{Remark}
\crefname{definition}{Definition}{Definitions}
\crefname{theorem}{Theorem}{Theorems}
\crefname{assumption}{Assumption}{Assumptions}
\crefname{lemma}{Lemma}{Lemmas}
\crefname{proposition}{Proposition}{Propositions}
\crefname{remark}{Remark}{Remarks}
\crefname{section}{Section}{Sections}
\crefname{equation}{Eq.}{Eqs.}
\newcommand{\indep}{\mathop{\perp\!\!\!\perp}}
\newcommand{\dd}{\mathop{}\!\mathrm{d}}
\title{DOLCE: Decomposing Off-Policy Evaluation/Learning\\into Lagged and Current Effects}
\author[1,2,$\ast$]{Shu Tamano}
\affil[1]{Department of Multidisciplinary Sciences, Graduate School of Arts and Sciences, The University of Tokyo, 3-8-1 Komaba, Meguro-ku, Tokyo 153-8902, Japan}
\affil[2]{Department of Epidemiology, National Institute of Infectious Diseases, Japan Institute for Health Security, 1-23-1 Toyama, Shinjuku-ku, Tokyo 162-0052, Japan}
\affil[$\ast$]{Email: tamano-shu212@g.ecc.u-tokyo.ac.jp}
\begin{document}

\maketitle

\begin{abstract}
    Off-policy evaluation and learning in contextual bandits use logged interaction data to estimate and optimize the value of a target policy.
    Most existing methods require sufficient action overlap between the logging and target policies, and violations can bias value and policy gradient estimates.
    To address this issue, we propose \textbf{DOLCE} (\textbf{D}ecomposing \textbf{O}ff-policy evaluation/learning into \textbf{L}agged and \textbf{C}urrent \textbf{E}ffects), which uses only lagged contexts already stored in bandit logs to construct lag-marginalized importance weights and to decompose the objective into a support-robust lagged correction term and a current, model-based term, yielding bias cancellation when the reward-model residual is conditionally mean-zero given the lagged context and action.
    With multiple candidate lags, DOLCE softly aggregates lag-specific estimates, and we introduce a moment-based training procedure that promotes the desired invariance using only logged lag-augmented data.
    We show that DOLCE is unbiased in an idealized setting and yields consistent and asymptotically normal estimates with cross-fitting under standard conditions.
    Our experiments demonstrate that DOLCE achieves substantial improvements in both off-policy evaluation and learning, particularly as the proportion of individuals who violate support increases.
\end{abstract}
\noindent
\textbf{Keywords}:
Causal inference;
Decision-making systems;
Lagged propensity;
Moment conditions;
Off-policy optimization;
Positivity

\section{Introduction}
\label{sec:introduction}
Many real-world decision-making systems, including recommendation engines \citep{li2010contextual, swaminathan2017off, saito2021counterfactual}, ad-placement platforms \citep{bottou2013counterfactual}, and clinical decision support \citep{qian2011performance, liao2021off}, interact with their users through the contextual bandit framework.
A logging (behavior) policy repeatedly observes a context, selects an action, and records the realized reward, yielding logged bandit feedback.
A central goal is to assess a target policy using logged bandit feedback, thereby reducing the cost and risk of online deployment and experimentation.
This is known as \emph{off-policy evaluation} (OPE) \citep{dudik2011doubly,li2011unbiased,wang2017optimal}.
When the target policy is unknown and must be optimized from logs, the corresponding problem is \emph{off-policy learning} (OPL), also referred to as learning from logged bandit feedback or counterfactual risk minimization \citep{swaminathan2015batch, swaminathan2015counterfactual, joachims2018deep}.

Most practical OPE and OPL methods \citep{dudik2011doubly, swaminathan2015batch, wang2017optimal, farajtabar2018more, su2019cab, su2020doubly, metelli2021subgaussian} rely on an overlap (support) condition:
for any context, every action assigned positive probability by the target policy must also be taken with positive probability by the logging policy.
This condition is fundamental for identifiability.
When it fails, the reward distribution for unsupported target actions is not identified from the logs, importance-weighted estimators become undefined or rely on extrapolation, and both evaluation and optimization can be biased and unstable \citep{bottou2013counterfactual,sachdeva2020off,felicioni2022off}.
Such overlap violations are common in real systems because logging policies often implement hard eligibility rules or safety constraints.
For instance, clinical guidelines may preclude specific treatments for particular presentations, and advertising systems may never display certain ads outside predefined criteria.
Moreover, overlap can deteriorate rapidly in high-dimensional settings, making violations and near-violations practically unavoidable in large-scale applications \citep{d2021overlap}.
We defer a broader discussion of related work to Appendix~\ref{app:sec:related-work}.

To overcome this issue, we propose \textbf{DOLCE} (\emph{\textbf{D}ecomposing} \emph{\textbf{O}ff-Policy} \emph{Evaluation/Learning into \textbf{L}agged and \textbf{C}urrent \textbf{E}ffects}).
The key idea is to leverage lagged contexts that are routinely available in logged interactions but are typically unused by standard OPE/OPL pipelines.
Instead of defining importance weights using the action probability at the current context (where support violations arise), DOLCE constructs weights from action probabilities marginalized to a lagged context.
This design only requires overlap at the level of a lagged context, which can hold even when common support fails at the current context.
Under the standard premise that the logging action depends on the current context, the resulting lag-based reweighting admits an exact bias-cancellation mechanism.

We further provide an estimation procedure tailored to the model conditions required by this bias cancellation.
In particular, our analysis shows that unbiasedness can be achieved under \emph{residual invariance}, which requires that the reward-model error does not vary with the current context once the lagged context and action are fixed.
This condition strictly relaxes global reward-model correctness and generalizes the local-correctness intuition studied in \citet{saito2023off,saito2025potec}.
Building on this perspective, we develop practical estimators for both OPE and gradient-based OPL, together with sample splitting and cross-fitting to support valid inference \citep{chernozhukov2018double}.
We establish oracle unbiasedness under lag-level overlap and residual invariance, and we prove consistency and asymptotic normality under standard nuisance-estimation rate conditions.
Empirically, DOLCE substantially reduces bias relative to standard methods as support violations become more severe, and it yields more effective off-policy optimization than conventional baselines.
\section{Preliminaries}
\label{sec:preliminaries}

\subsection{Setup}
\label{subsec:setup}
Let $(\Omega, \mathcal{F}, P)$ be a probability space.
We consider a contextual bandit with context space $\mathcal{X} \subset \mathbb{R}^{d}$, a finite action space $\mathcal{A}$, and reward space $\mathcal{R}\subset \mathbb{R}$.
For each subject $i\in [n] := \{1,\ldots, n\}$, we observe $(X_i,A_i,R_i)$ and collect the dataset $\mathcal{D} := \{(X_{i}, A_{i}, R_{i})\}_{i=1}^{n}$, where $(X_i, A_i, R_i)$ are independent and identically distributed (i.i.d.) draws from the logging distribution induced by an unknown context marginal $p(x)$, a logging policy $\pi_0(a\mid x)$, and an unknown reward conditional distribution $p(r\mid x, a)$.
When lagged contexts are available (Section~\ref{subsec:lagged-formulation}), we augment the observation with lagged covariates and allow the conditional reward law to depend on both current and lagged contexts.

For identifiability of causal effects, we adopt the standard potential outcome notation \citep{neyman1923sur, rubin1974estimating}.
Let $\{R(a): a\in \mathcal{A}\}$ denote the potential rewards given context $X$.
Throughout, we adopt the standard identifying assumptions, namely consistency and unconfoundedness, stated below.

\begin{assumption}[Consistency]
\label{ass:ci1:consistency}
    For each subject, the observed reward equals the potential reward under the action actually taken.
    That is, $R=R(A)$ almost surely under $P$.
\end{assumption}

\begin{assumption}[Unconfoundedness]
\label{ass:ci2:unconfoundedness}
    Conditional on the observed context, the logging action is independent of all potential rewards.
    That is, $\{R(a)\}_{a\in\mathcal{A}} \indep A\mid X$ under $P$.
    Equivalently, the observed data admit the factorization $P(\dd x, \dd a, \dd r) = p(x)\pi_0(a\mid x)p(r\mid x, a)\dd x \dd a \dd r$.
\end{assumption}

\begin{assumption}[Positivity / overlap for contextual bandits]
\label{ass:positivity-cb}
    For each $x$ and each $a$ with $\pi_{\theta}(a\mid x) > 0$, we have $\pi_{0}(a\mid x) > 0$ (common support).
    For gradient-based OPL, we typically require full support: $\pi_{0}(a\mid x) > 0$ for all $(a, x)$, and we restrict to policy classes with $\pi_{\theta}(a\mid x)>0$ when gradients are used.
\end{assumption}

Define the mean reward function $q(x,a) := \mathbb{E}_{P}[R\mid X=x, A=a]$.
Let $\pi_{\theta}(a\mid x)$ denote a target policy parameterized by $\theta\in \Theta \subset \mathbb{R}^m$.
The \emph{policy value} is defined as follows:
\begin{equation*}
    \nonumber
    V(\pi_{\theta})
    :=
    \mathbb{E}_{P}\!\left[
        \sum_{a\in\mathcal{A}}\pi_{\theta}(a\mid X)q(X,a)
    \right]
    =
    \mathbb{E}_{p(x)\pi_{\theta}(a\mid x)p(r\mid x, a)}[R]
    .
\end{equation*}

\paragraph{Off-policy evaluation (OPE).}
Given $\mathcal{D}$ and a fixed target policy $\pi_{\theta}$, the goal is to estimate $V(\pi_{\theta})$.

\paragraph{Off-policy learning (OPL).}
Given $\mathcal{D}$ and a policy class $\{\pi_{\theta}: \theta\in\Theta\}$, the goal is to solve $\theta^{\star}\in\arg\max_{\theta\in\Theta} V(\pi_{\theta})$.
In gradient-based OPL, we additionally assume $\theta \mapsto \pi_{\theta}(a\mid x)$ is differentiable and define the score $s_{\theta}(a\mid x) := \nabla_{\theta}\log \pi_{\theta}(a\mid x)$.

Under Assumptions~\ref{ass:ci1:consistency}--\ref{ass:positivity-cb}, $V(\pi_{\theta})$ and $\nabla_{\theta} V(\pi_{\theta})$ are identifiable from $P$ via observed-data functionals.

\subsection{Baselines}
\label{subsec:baselines}

\paragraph{OPE baselines.}
Let $\hat{q}(x,a)$ be a reward model.
Standard OPE estimators include the Direct Method (DM), Inverse Propensity Score (IPS), and Doubly Robust (DR):
\begin{align}
% \label{eq:dm}
\nonumber
    \hat{V}_{\mathrm{DM}}(\pi_{\theta})
    &:=
    \frac{1}{n}\sum_{i=1}^{n}\sum_{a\in\mathcal{A}}\pi_{\theta}(a\mid X_i)\hat{q}(X_i, a)
    ,
    \\
\label{eq:ips}
    \hat{V}_{\mathrm{IPS}}(\pi_{\theta})
    &:=
    \frac{1}{n}\sum_{i=1}^{n}\frac{\pi_{\theta}(A_i\mid X_i)}{\pi_0(A_i\mid X_i)}R_i
    ,
    \\
\label{eq:dr}
    \hat{V}_{\mathrm{DR}}(\pi_{\theta})
    &:=
    \frac{1}{n}\sum_{i=1}^n
    \Bigl[
        \frac{\pi_{\theta}(A_i\mid X_i)}{\pi_0(A_i\mid X_i)}\{R_i-\hat{q}(X_i, A_i)\}
        +
        \sum_{a\in\mathcal{A}}\pi_{\theta}(a\mid X_i)\hat{q}(X_i, a)
    \Bigr]
    .
\end{align}
With the true propensities, $\hat{V}_{\mathrm{IPS}}$ is unbiased under Assumption~\ref{ass:positivity-cb}.
$\hat{V}_{\mathrm{DM}}$ is unbiased when $\hat{q}=q$.
$\hat{V}_{\mathrm{DR}}$ is doubly robust in the sense of causal inference;
see, e.g., \citet{li2011unbiased,dudik2011doubly}.

\paragraph{OPL baselines.}
OPL is commonly addressed by
(i) regression-based learning (fit $\hat{q}$ and act greedily) or
(ii) gradient-based learning.
For gradient-based OPL, the standard identity (log-derivative trick) yields
\begin{equation*}
% \label{eq:pg-identity}
    \nabla_{\theta} V(\pi_{\theta})
    =
    \mathbb{E}_{P}\Bigl[
        \sum_{a\in\mathcal{A}}\pi_{\theta}(a\mid X)q(X,a)s_{\theta}(a\mid X)
    \Bigr]
    .
\end{equation*}
Corresponding IPS and DR policy-gradient estimators are obtained by importance weighting (and reward-model augmentation); see Appendix~\ref{app:subsec:baseline-gradients} for the explicit formulas.
These estimators are unbiased under Assumption~\ref{ass:positivity-cb} and (for DR-type variants) when $\hat{q}=q$.

\subsection{Support Violation}
\label{subsec:support-violation}

When Assumption~\ref{ass:positivity-cb} fails, IPS/DR (and their gradient analogues) are generally biased because logged data provide no information about rewards for actions that never occur under the logging policy at a given context.
Define the unsupported action set as
\begin{equation*}
    \mathcal{U}(x;\pi_{\theta}, \pi_0)
    :=
    \{a\in \mathcal{A}: \pi_{\theta}(a\mid x) > 0
    ,\;
    \pi_{0}(a\mid x) = 0\}
    .
\end{equation*}
Under Assumptions~\ref{ass:ci1:consistency} and \ref{ass:ci2:unconfoundedness}, the bias of IPS is determined by the target mass assigned to $\mathcal{U}(X;\pi_\theta,\pi_0)$, while the bias of DR is determined by reward-model error on $\mathcal{U}(X;\pi_\theta, \pi_0)$ \citep{sachdeva2020off,felicioni2022off}; see Appendix~\ref{app:subsec:impact-of-support-violation} for details.
\section{Proposed Method}
\label{sec:proposed-method}
We propose \textbf{DOLCE}, a lag-aware doubly robust framework for off-policy evaluation and learning under support violations.
DOLCE replaces the usual current-context importance weights with lag-marginal ratios based on past contexts and achieves exact bias cancellation under a residual-invariance condition (Assumption~\ref{ass:ri}) together with lag overlap (Assumption~\ref{ass:common-lag-support}).
For multiple candidate lags, DOLCE implements a one-good-lag principle via softmin aggregation weighted by an approximate local correctness (ALC) score.

\subsection{Lagged Formulation and One-Good-Lag Principle}
\label{subsec:lagged-formulation}

\paragraph{Augmented data with lagged contexts.}
In many contextual bandit systems, contexts are observed repeatedly over time.
We assume that, in addition to the current context $X$, we have access to a finite set of lag contexts
\begin{equation*}
    \boldsymbol{X}_0
    :=
    (X^{(1)},\ldots,X^{(K)})
    ,
    \quad
    X^{(k)}\in\mathcal{X}_0^{(k)}
    ,
\end{equation*}
corresponding to time lags $\mathcal{L} = \{\ell_1,\ldots, \ell_K\}$ (e.g., $X^{(k)} = X_{t-\ell_k}$).
Accordingly, the lagged dataset becomes
\begin{equation*}
    \mathcal{D}_{\mathrm{lag}}
    :=
    \{(\boldsymbol{X}_{0i}, X_i, A_i, R_i)\}_{i=1}^n
    ,
\end{equation*}
with i.i.d.\ draws $(\boldsymbol{X}_{0}, X, A, R)\sim P$.

For each lag $k\in[K]$, define the conditional mean reward
\begin{equation*}
    q_{k}(x,x^{(k)},a)
    :=
    \mathbb{E}_{P}[R\mid X=x, X^{(k)}=x^{(k)},A=a]
    .
\end{equation*}

\paragraph{Current-action sufficiency.}
We formalize the common contextual-bandit assumption that the logging policy depends only on the current context.

\begin{assumption}[Current-action sufficiency]
\label{ass:current-action-sufficiency}
    For each $k\in[K]$, we have $A\indep X^{(k)}\mid X$ under $P$.
    Equivalently, $\pi_0(a\mid X, X^{(k)}) = \pi_0(a\mid X)$ $P$-a.s.
\end{assumption}

Under Assumption~\ref{ass:current-action-sufficiency}, $q$ and $q_k$ are linked via iterated expectation, yielding an equivalent representation of $V(\pi_\theta)$ in terms of $q_k$; see Appendix~\ref{app:subsec:q-qk-relation}.

\paragraph{Lag-marginalized policies.}
For OPE/OPL, the key objects are the action probabilities marginalized to a lag context:
\begin{align*}
    % \label{eq:barpi-target}
    \bar{\pi}_{\theta,k}(a\mid x^{(k)})
    &:=
    \mathbb{E}_{P}[\pi_{\theta}(a\mid X) \mid X^{(k)} = x^{(k)}]
    ,
    \\
    % \label{eq:barpi_logging}
    \bar{\pi}_{0,k}(a\mid x^{(k)})
    &:=
    P(A=a\mid X^{(k)} = x^{(k)})
    .
\end{align*}
Note that $\bar{\pi}_{0,k}$ is identifiable directly from $P$ without Assumption~\ref{ass:current-action-sufficiency} by modeling $P(A\mid X^{(k)})$.
Assumption~\ref{ass:current-action-sufficiency} is used for the key bias-cancellation argument in DOLCE.

\begin{assumption}[Common lag support / lag overlap]
\label{ass:common-lag-support}
    Fix $k\in[K]$ and define
    \begin{equation*}
        \mathcal{S}_{\theta,k}
        :=
        \{(x^{(k)},a): \bar{\pi}_{\theta,k}(a\mid x^{(k)}) > 0\}
        .
    \end{equation*}
    For $P$-a.e. $x^{(k)}$ and all $a\in\mathcal{A}$,
    \begin{equation*}
        \bar{\pi}_{\theta,k}(a\mid x^{(k)}) > 0
        \Rightarrow
        \bar{\pi}_{0,k}(a\mid x^{(k)}) > 0
        .
    \end{equation*}
    In addition, there exists $\epsilon > 0$ such that
    \begin{equation*}
        \mathbb{E}_{P}\Biggl[
            \sum_{a\in\mathcal{A}}
            \boldsymbol{1}\bigl\{
                (X^{(k)},a)\in \mathcal{S}_{\theta,k}
            \bigr\}
            \bar\pi_{0,k}(a\mid X^{(k)})
        \Biggr]
        \ge
        \epsilon
        .
    \end{equation*}
\end{assumption}

\paragraph{Residual invariance and one-good-lag.}
We introduce a model property that is strictly weaker than global correctness of $q_k$.
Define the residual for a candidate reward model $\tilde{q}_k$ by
\begin{equation*}
    \Delta_k(x,x^{(k)},a;\tilde{q}_k)
    :=
    q_k(x,x^{(k)},a)-\tilde{q}_k(x,x^{(k)},a)
    .
\end{equation*}

\begin{assumption}[Residual invariance for lag $k$]
\label{ass:ri}
    Let $\tilde{q}_k: \mathcal{X}\times \mathcal{X}_0^{(k)}\times \mathcal{A} \to \mathbb{R}$ be measurable.
    There exists a measurable function $\delta_k: \mathcal{X}_0^{(k)}\times \mathcal{A}\to \mathbb{R}$ such that, for every $a\in \mathcal{A}$,
    \begin{equation*}
        \Delta_k(X,X^{(k)},a;\tilde{q}_k)
        =
        \delta_k(X^{(k)},a)
        \quad
        P\text{-a.s.}
    \end{equation*}
\end{assumption}
See Appendix~\ref{app:subsec:residual-invariance} for equivalent characterizations and its connection to local correctness (see, e.g., \citet{saito2023off, saito2025potec}).

Residual invariance generalizes the local-correctness intuition \citep{saito2023off, saito2025potec}:
$\tilde{q}_k$ need not approximate $q_k$ globally as long as its error does not vary with the current context once $(X^{(k)}, A)$ is fixed.

For multi-lag settings, we adopt the following weak requirement.
\begin{assumption}[One-good-lag residual invariance]
\label{ass:one-good-lag}
    There exists at least one lag $k^{\star}\in[K]$ such that the employed reward model $\tilde{q}_{k^{\star}}$ satisfies Assumption~\ref{ass:ri}.
\end{assumption}

\paragraph{Softmin aggregation via ALC.}
To select a reliable lag in a multi-lag setting, we quantify the degree of residual-invariance violation using an approximate local correctness (ALC) score
\begin{equation*}
    \mathrm{ALC}_k(\tilde{q}_k)
    :=
    \mathbb{E}_P\bigl[
        \mathbb{V}[\Delta_k(X,X^{(k)}, A;\tilde{q}_k)\mid X^{(k)}, A]
    \bigr]
    .
\end{equation*}
By construction, $\mathrm{ALC}_{k}(\tilde{q}_k) = 0$ if Assumption~\ref{ass:ri} holds.
We estimate $\mathrm{ALC}_k$ (or a lower bound thereof) and define lag weights via softmin:
\begin{equation}
\label{eq:softmin-weights}
    \alpha_k
    :=
    \frac{\exp\{-\widehat{\mathrm{ALC}}_k/\tau\}}{\sum_{j=1}^K \exp\{-\widehat{\mathrm{ALC}}_j/\tau\}}
    ,
    \quad
    \tau>0
    .
\end{equation}
When $\tau \downarrow 0$, $\alpha_k$ concentrates on the empirically best lag, implementing the one-good-lag principle.

\subsection{DOLCE for OPE}
\label{subsec:dolce-ope}

\paragraph{Lagged importance weights.}
For each lag $k$, define the lag weight
\begin{equation*}
    w_k(x^{(k)},a)
    :=
    \frac{\bar{\pi}_{\theta,k}(a\mid x^{(k)})}{\bar{\pi}_{0,k}(a\mid x^{(k)})}
    .
\end{equation*}
In OPE, $\pi_{\theta}$ is known, so $\bar{\pi}_{\theta,k}$ can be estimated by regressing the pseudo-outcome $\pi_{\theta}(a\mid X)$ on $X^{(k)}$.

\paragraph{Weight clipping (optional).}
For numerical stability, we may clip lag weights at a threshold $d$;
see Appendix~\ref{app:subsec:weight-clipping} for the definition and the induced bias term.

\paragraph{Lag-specific DOLCE estimator.}
Let $\hat{q}_k$ be an estimate of $q_k$, and let $\hat{w}_k$ estimate $w_k$.
We define, for each lag $k$,
\begin{align}
    \nonumber
    \hat{V}_k(\pi_{\theta})
    &:=
    \frac{1}{n}\sum_{i=1}^n
    \Bigl[
        \hat{w}_k(X_i^{(k)}, A_i)\{R_i-\hat{q}_k(X_i,X_i^{(k)},A_i)\}
        \\
    &\qquad
    \label{eq:dolce-ope-k}
        +\sum_{a\in\mathcal{A}}\pi_{\theta}(a\mid X_i)\hat{q}_k(X_i,X_i^{(k)},a)
    \Bigr]
    .
\end{align}
The final multi-lag DOLCE estimate is the softmin-aggregated version
\begin{equation}
\label{eq:dolce-ope-softmin}
    \hat{V}_{\mathrm{DOLCE}}(\pi_{\theta})
    :=
    \sum_{k=1}^K \alpha_k \hat{V}_k(\pi_\theta)
    ,
\end{equation}
where $\alpha_k$ are defined in \eqref{eq:softmin-weights} (computed with sample splitting / cross-fitting;
see Section~\ref{subsec:estimating-procedure}).

\paragraph{Bias cancellation and inference.}
We collect oracle finite-sample identities for lag-$k$ DOLCE in Appendix~\ref{app:subsec:oracle-dolce-identities}.
In particular, an exact bias identity for the oracle-weight version of \eqref{eq:dolce-ope-k} is given in Proposition~\ref{prop:bias-dolce-ope} (proof in Appendix~\ref{app:proof-bias-dolce-ope}), and a finite-sample variance decomposition is given in Proposition~\ref{prop:var-dolce-ope-oracle} (proof in Appendix~\ref{app:proof-var-dolce-ope-oracle}).
Under Assumptions~\ref{ass:current-action-sufficiency}--\ref{ass:ri}, the oracle DOLCE estimator is unbiased (Theorem~\ref{thm:unbiased-dolce-ope-oracle}).
With estimated nuisances and cross-fitting, $\hat{V}_k(\pi_\theta)$ is consistent and asymptotically normal under standard regularity and nuisance-rate conditions;
see Appendix~\ref{app:subsec:inference}, in particular Theorem~\ref{thm:an-dolce-ope}, for the influence-function representation and asymptotic variance.

Under Assumption~\ref{ass:current-action-sufficiency}, $\pi_0(a\mid X)$ is the conditional action probability given $X$, and the lag marginal $\bar{\pi}_{0,k}(a\mid X^{(k)})$ equals $\mathbb{E}_P[\pi_0(a\mid X)\mid X^{(k)}]$.
This enables exact cancellation under Assumption~\ref{ass:ri}.

\begin{theorem}[Unbiasedness under lag overlap and residual invariance]
\label{thm:unbiased-dolce-ope-oracle}
    Fix $k\in[K]$.
    Suppose Assumptions~\ref{ass:ci1:consistency}--\ref{ass:ci2:unconfoundedness}, \ref{ass:current-action-sufficiency}, and \ref{ass:common-lag-support} hold.
    If $\tilde{q}_k$ satisfies residual invariance (Assumption~\ref{ass:ri}), then the lag-$k$ oracle DOLCE estimator is unbiased:
    \begin{equation*}
        \mathbb{E}_P[\hat{V}_k(\pi_{\theta})]=V(\pi_{\theta})
        ,
    \end{equation*}
    where $\hat{V}_k$ uses $\hat{w}_k \equiv w_k$ and $\hat{q}_k \equiv \tilde{q}_k$.
\end{theorem}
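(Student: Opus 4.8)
The plan is to exploit the i.i.d.\ structure to reduce $\mathbb{E}_P[\hat V_k(\pi_\theta)]$ to the expectation of a single summand, and then to show that the weighted residual (correction) term and the model-based (direct) term combine to $V(\pi_\theta)$ via an exact cancellation of the residual-invariance offset $\delta_k$. Write the summand as the sum of the correction term $C := w_k(X^{(k)},A)\{R-\tilde q_k(X,X^{(k)},A)\}$ and the direct term $D := \sum_{a\in\mathcal{A}}\pi_\theta(a\mid X)\tilde q_k(X,X^{(k)},a)$, so that $\mathbb{E}_P[\hat V_k(\pi_\theta)] = \mathbb{E}_P[C] + \mathbb{E}_P[D]$; the goal is to compute each expectation in closed form.

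For the correction term, I would first take the conditional expectation over $R$ given $(X,X^{(k)},A)$. By the definition of $q_k$ this replaces $R$ with $q_k(X,X^{(k)},A)$, so $\mathbb{E}_P[C] = \mathbb{E}_P[w_k(X^{(k)},A)\,\Delta_k(X,X^{(k)},A;\tilde q_k)]$. Applying residual invariance (Assumption~\ref{ass:ri}) collapses the residual to $\delta_k(X^{(k)},A)$, which depends only on $(X^{(k)},A)$. I would then condition on $X^{(k)}$ and sum over the finite action set: since $P(A=a\mid X^{(k)}) = \bar\pi_{0,k}(a\mid X^{(k)})$ and $w_k = \bar\pi_{\theta,k}/\bar\pi_{0,k}$, the logging marginal cancels the denominator, giving $\bar\pi_{0,k}(a\mid X^{(k)})\,w_k(X^{(k)},a) = \bar\pi_{\theta,k}(a\mid X^{(k)})$. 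Here lag overlap (Assumption~\ref{ass:common-lag-support}) guarantees the weight is well defined on the support of $\bar\pi_{\theta,k}$ and that the product identity holds $P$-a.s.\ (the term vanishes wherever $\bar\pi_{\theta,k}=0$). This yields $\mathbb{E}_P[C] = \mathbb{E}_P[\sum_{a}\bar\pi_{\theta,k}(a\mid X^{(k)})\,\delta_k(X^{(k)},a)]$.

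For the direct term, I would substitute $\tilde q_k = q_k - \delta_k$ (again by residual invariance) and split $\mathbb{E}_P[D]$ into a $q_k$-part and a $\delta_k$-part. For the $q_k$-part, the central step is the $q$-to-$q_k$ bridge: conditioning on $X$ and invoking current-action sufficiency ($A\indep X^{(k)}\mid X$, Assumption~\ref{ass:current-action-sufficiency}), the conditional law of $X^{(k)}$ given $(X,A=a)$ coincides with that given $X$ alone, so $\mathbb{E}_P[q_k(X,X^{(k)},a)\mid X] = \mathbb{E}_P[q_k(X,X^{(k)},a)\mid X,A=a] = q(X,a)$. Hence the $q_k$-part equals $\mathbb{E}_P[\sum_a \pi_\theta(a\mid X)q(X,a)] = V(\pi_\theta)$. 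For the $\delta_k$-part, conditioning on $X^{(k)}$ and using $\mathbb{E}_P[\pi_\theta(a\mid X)\mid X^{(k)}] = \bar\pi_{\theta,k}(a\mid X^{(k)})$ produces exactly $\mathbb{E}_P[\sum_a \bar\pi_{\theta,k}(a\mid X^{(k)})\delta_k(X^{(k)},a)]$, which is identical to $\mathbb{E}_P[C]$.

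Combining the pieces gives $\mathbb{E}_P[\hat V_k(\pi_\theta)] = \mathbb{E}_P[C] + V(\pi_\theta) - \mathbb{E}_P[C] = V(\pi_\theta)$, the claimed exact bias cancellation. I expect the main obstacle to be the $q$-to-$q_k$ bridge: it is the only place current-action sufficiency is essential, and care is needed to argue that conditioning on the realized action $A=a$ does not perturb the conditional distribution of the lagged context given $X$. A second delicate point is the overlap bookkeeping---ensuring $w_k$ is well defined and that $\bar\pi_{0,k}w_k=\bar\pi_{\theta,k}$ holds almost surely, including on zero-mass actions---so that the $\delta_k$ contributions from the two terms marginalize against the \emph{same} $\bar\pi_{\theta,k}$ and cancel exactly rather than merely approximately.
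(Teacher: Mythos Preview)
Your proof is correct and complete. It differs in organization from the paper's argument. The paper first establishes the general bias identity (Proposition~\ref{prop:bias-dolce-ope}),
\[
\mathbb{E}_P[\hat V_k(\pi_\theta)]-V(\pi_\theta)
=\mathbb{E}_P\Bigl[\sum_{a}\{\pi_0(a\mid X)w_k(X^{(k)},a)-\pi_\theta(a\mid X)\}\,\Delta_k(X,X^{(k)},a;\tilde q_k)\Bigr],
\]
and then, under residual invariance, factors $\delta_k(X^{(k)},a)$ out and shows the remaining coefficient has zero conditional mean given $X^{(k)}$ by using current-action sufficiency on the \emph{logging} side, namely $\mathbb{E}_P[\pi_0(a\mid X)\mid X^{(k)}]=\bar\pi_{0,k}(a\mid X^{(k)})$. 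You instead split into $C+D$, compute each directly, and invoke current-action sufficiency on the \emph{reward} side via the $q$--$q_k$ bridge (Lemma~\ref{lem:q-qk-relation}) to reduce the $q_k$-part of $D$ to $V(\pi_\theta)$, after which the $\delta_k$ contributions in $C$ and $D$ both marginalize to $\mathbb{E}_P[\sum_a\bar\pi_{\theta,k}(a\mid X^{(k)})\delta_k(X^{(k)},a)]$ and cancel. The two routes are equivalent algebra; the paper's buys a reusable bias formula valid for arbitrary $\tilde q_k$ (useful for the clipping and asymptotic analyses), while yours is more self-contained and makes the exact cancellation of the $\delta_k$ terms against the \emph{same} lag-marginal target $\bar\pi_{\theta,k}$ especially transparent.
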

See Appendix~\ref{app:proof-unbiased-dolce-ope-oracle} for the proof of Theorem~\ref{thm:unbiased-dolce-ope-oracle}.

\paragraph{Inference (estimated nuisances, cross-fitting).}
With estimated nuisances and cross-fitting, $\hat{V}_k(\pi_\theta)$ admits an influence-function expansion and is asymptotically normal under standard regularity and nuisance-rate conditions;
see Appendix~\ref{app:subsec:inference}.

\subsection{DOLCE for OPL}
\label{subsec:dolce-opl}
We present the gradient analogue of DOLCE.
For gradient-based OPL, we consider a differentiable policy class $\{\pi_{\theta}\}$ and aim to estimate $g(\theta) := \nabla_{\theta}V(\pi_{\theta})$.

\paragraph{Lag-marginal score.}
For each lag $k$, define the lag-marginal score $\bar{s}_{\theta,k}(a\mid x^{(k)}) := \nabla_{\theta}\log\bar\pi_{\theta,k}(a\mid x^{(k)})$, where $\bar{\pi}_{\theta,k}(a\mid x^{(k)}) = \mathbb{E}_{P}[\pi_{\theta}(a\mid X)\mid X^{(k)}=x^{(k)}]$.
We estimate $\bar{s}_{\theta,k}$ by regressing $\pi_{\theta}(a\mid X)s_{\theta}(a\mid X)$ and $\pi_{\theta}(a\mid X)$ on $X^{(k)}$ and taking their ratio;
see Appendix~\ref{app:subsec:dolce-opl-details} for conditions and details.

\paragraph{Lag-$k$ DOLCE gradient estimator.}
Let $\hat{q}_k$ and $\hat{w}_{\theta, k}$ be cross-fitted.
Define
\begin{align}
\nonumber
    \hat{g}_k(\theta)
    &:=
    \frac{1}{n}\sum_{i=1}^n\Bigl[
        \hat{w}_{\theta,k}(X_i^{(k)},A_i)\{R_i-\hat{q}_k(X_i,X_i^{(k)},A_i)\}
        \hat{\bar{s}}_{\theta,k}(A_i\mid X_i^{(k)})
        \\
\label{eq:dolce-pg-k}
    &\qquad
        +\sum_{a\in\mathcal{A}}\pi_{\theta}(a\mid X_i)\hat{q}_k(X_i,X_i^{(k)},a)s_{\theta}(a\mid X_i)
    \Bigr]
    .
\end{align}
The multi-lag version is obtained by softmin aggregation as in \eqref{eq:dolce-ope-softmin}.

\begin{remark}[Theoretical consequences mirror OPE]
    Under Assumptions~\ref{ass:common-lag-support} and \ref{ass:ri} (and suitable smoothness of $\pi_{\theta}$ so that $\bar{s}_{\theta,k}$ is well-defined;
    see, e.g., Assumption~\ref{ass:diff-under-ce}), together with additional nuisance-rate conditions for $\widehat{\bar{s}}_{\theta,k}$, the estimator~\eqref{eq:dolce-pg-k} is
    (i) unbiased in the oracle setting,
    (ii) consistent, and
    (iii) asymptotically normal for fixed $\theta$.
    Proofs follow from standard policy-gradient arguments and are deferred to Appendix~\ref{app:sec:proofs}.
\end{remark}

\subsection{Estimating Procedure: Moment-Targeted Residual Invariance}
\label{subsec:estimating-procedure}

\citet{saito2023off,saito2025potec} propose a two-step procedure that directly targets pairwise differences in $X$ to enforce local correctness.
This requires constructing pairwise datasets $\mathcal{D}_{\mathrm{pair}}$, which can be restrictive when exact (or near) matches in lag contexts are scarce.
We instead propose an estimator that directly targets residual invariance via moment conditions using only $\mathcal{D}_{\mathrm{lag}}$.

\paragraph{Residual invariance as an orthogonality condition.}
Fix a lag $k$ and define the $\sigma$-field $\mathcal{G}_k := \sigma(X^{(k)}, A)$.
Let $L_2(P)$ denote square-integrable functions of $(X,X^{(k)},A)$ under $P$.

\begin{proposition}[Residual invariance $\Leftrightarrow$ moment orthogonality]
\label{prop:ri-moment-equiv}
    Fix lag $k$ and let $\Delta\in L_2(P)$.
    Then $\Delta$ is $\mathcal{G}_k$-measurable (i.e., $\Delta\in L_2(\mathcal{G}_k)$) if and only if
    \begin{equation*}
        \mathbb{E}_P[\Delta f]
        =
        0
        ,
        \quad
        \forall f\in L_{2,0}(\mathcal{G}_k)
        :=
        \{f\in L_2(P): \mathbb{E}_P[f\mid \mathcal{G}_k] = 0\}
        .
    \end{equation*}
    Equivalently, $\|\Delta-\mathbb{E}_P[\Delta\mid \mathcal{G}_k]\|_{L_2(P)}=0$.
    In particular, when $\Delta=\Delta_k(X,X^{(k)},A;\tilde{q}_k)$ is a model residual, this condition is exactly Assumption~\ref{ass:ri}.
\end{proposition}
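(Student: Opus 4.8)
The plan is to recognize this as the standard identification of the conditional-expectation operator with orthogonal projection in the Hilbert space $L_2(P)$, equipped with the inner product $\langle g,h\rangle:=\mathbb{E}_P[gh]$. The first step is to record three structural facts: the subspace $L_2(\mathcal{G}_k)$ of square-integrable $\mathcal{G}_k$-measurable functions is closed; the map $\mathbb{E}_P[\cdot\mid\mathcal{G}_k]$ is exactly the orthogonal projection onto $L_2(\mathcal{G}_k)$; and its orthogonal complement is precisely $L_{2,0}(\mathcal{G}_k)$. The last point is the one I would verify explicitly: $f\perp L_2(\mathcal{G}_k)$ means $\mathbb{E}_P[fh]=0$ for every $h\in L_2(\mathcal{G}_k)$, and specializing to $h=\mathbb{E}_P[f\mid\mathcal{G}_k]$ together with the tower property gives $\mathbb{E}_P[(\mathbb{E}_P[f\mid\mathcal{G}_k])^2]=0$, hence $\mathbb{E}_P[f\mid\mathcal{G}_k]=0$; the converse is immediate from the tower property.

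For the forward implication, assuming $\Delta\in L_2(\mathcal{G}_k)$, I would take an arbitrary $f\in L_{2,0}(\mathcal{G}_k)$ and pull the $\mathcal{G}_k$-measurable factor $\Delta$ inside the conditional expectation:
\[
\mathbb{E}_P[\Delta f]=\mathbb{E}_P\!\left[\Delta\,\mathbb{E}_P[f\mid\mathcal{G}_k]\right]=0 .
\]
For the reverse implication, I would set $f_0:=\Delta-\mathbb{E}_P[\Delta\mid\mathcal{G}_k]$, which lies in $L_2(P)$ because conditional expectation is an $L_2$-contraction and satisfies $\mathbb{E}_P[f_0\mid\mathcal{G}_k]=0$, so $f_0\in L_{2,0}(\mathcal{G}_k)$ is an admissible test function. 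Substituting $f=f_0$ into the hypothesis and decomposing $\Delta=\mathbb{E}_P[\Delta\mid\mathcal{G}_k]+f_0$, the cross term $\mathbb{E}_P[\mathbb{E}_P[\Delta\mid\mathcal{G}_k]\,f_0]$ vanishes by the orthogonality just established, leaving $\mathbb{E}_P[f_0^2]=0$. Hence $\Delta=\mathbb{E}_P[\Delta\mid\mathcal{G}_k]$ $P$-a.s., which is both the $\mathcal{G}_k$-measurability of $\Delta$ and the stated identity $\|\Delta-\mathbb{E}_P[\Delta\mid\mathcal{G}_k]\|_{L_2(P)}=0$.

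The final "in particular" claim is then a matter of unwinding definitions: for a function of $(X,X^{(k)},A)$, being $\mathcal{G}_k=\sigma(X^{(k)},A)$-measurable is equivalent, by the Doob--Dynkin lemma, to being $P$-a.s.\ equal to some measurable $\delta_k(X^{(k)},A)$, which is exactly Assumption~\ref{ass:ri} applied to $\Delta=\Delta_k(\cdot\,;\tilde q_k)$. This argument is essentially a textbook Hilbert-space projection fact, so I do not anticipate a genuine obstacle; the only points requiring care are (i) justifying closedness of $L_2(\mathcal{G}_k)$ so that the projection exists and coincides with conditional expectation, and (ii) confirming that the residual $f_0$ is genuinely square-integrable with $\mathbb{E}_P[f_0\mid\mathcal{G}_k]=0$ so that it qualifies as a test function. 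Both are standard, and the invocation of Doob--Dynkin in the last step is the only place where the concrete structure $\mathcal{G}_k=\sigma(X^{(k)},A)$, rather than an abstract sub-$\sigma$-field, is actually used.
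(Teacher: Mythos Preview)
Your proposal is correct and follows essentially the same approach as the paper's proof: both identify the statement with Hilbert-space orthogonal projection in $L_2(P)$, prove $(\Rightarrow)$ by pulling the $\mathcal{G}_k$-measurable $\Delta$ through the conditional expectation, and prove $(\Leftarrow)$ by testing against $f_0:=\Delta-\mathbb{E}_P[\Delta\mid\mathcal{G}_k]\in L_{2,0}(\mathcal{G}_k)$ and using orthogonality of the decomposition to conclude $\|f_0\|_{L_2(P)}=0$. Your additional remarks on closedness of $L_2(\mathcal{G}_k)$, the explicit verification that $L_{2,0}(\mathcal{G}_k)=L_2(\mathcal{G}_k)^\perp$, and the Doob--Dynkin step for the ``in particular'' clause are all sound elaborations the paper leaves implicit.
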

See Appendix~\ref{app:proof-ri-moment-equiv} for the proof of Proposition~\ref{prop:ri-moment-equiv}.

\paragraph{A conditional-variance objective.}
For a candidate reward model $q'\in\mathcal{Q}_k$ (a hypothesis class), define
\begin{equation*}
    \mathrm{ALC}_k(q')
    :=
    \mathbb{E}_P\Bigl[
        \mathbb{V}\bigl(
            q_k(X,X^{(k)},A) - q'(X,X^{(k)}, A)\mid \mathcal{G}_k
        \bigr)
    \Bigr]
    ,
\end{equation*}
where $\mathbb{V}(\cdot)$ denotes the variance.
Then $\mathrm{ALC}_k(q') = 0$ if and only if $q_k-q'\in L_2(\mathcal{G}_k)$, i.e., $q'$ satisfies residual invariance (Assumption~\ref{ass:ri}).

\paragraph{Moment-targeting via minimax / generalized method of moments (GMM).}
By Proposition~\ref{prop:ri-moment-equiv}, $\mathrm{ALC}_k$ can be characterized as a supremum over centered test functions.
Let $\mathcal{F}_k$ be a critic class and define the centered critic
\begin{equation*}
    \tilde{f}(X,X^{(k)},A)
    \!:=\!
    f(X,X^{(k)},A) - \mathbb{E}_P[f(X,X^{(k)},A)\mid \mathcal{G}_k]
    .
\end{equation*}
In practice, the conditional expectation is replaced by a regression estimate and cross-fitted.

Define the population objective
\begin{align}
    \label{eq:mtri-pop}
    &\mathcal{L}_k(q')
    :=
    \mathbb{E}_P[(R-q'(X,X^{(k)},A))^2]\\
    \nonumber
    &+
    \lambda
    \sup_{\substack{\tilde{f}\in\tilde{\mathcal{F}}_k,\\ \|\tilde{f}\|_{L_2(P)}\le 1}}
    \Bigl(
        \mathbb{E}_P[(R-q'(X,X^{(k)},A))\tilde{f}(X,X^{(k)},A)]
    \Bigr)^2
    ,
\end{align}
where $\tilde{\mathcal{F}}_k := \{\tilde{f}: \tilde{f} = f-\mathbb{E}_P[f\mid \mathcal{G}_k],\; f\in \mathcal{F}_k\}$ and $\lambda > 0$ is a tuning parameter.

\begin{definition}[MTRI (Moment-Targeted Residual Invariance) estimator]
\label{def:mtri}
    Fix lag $k$.
    Given $\mathcal{D}_{\mathrm{lag}}$, the MTRI estimator $\hat{q}_k$ is defined as an approximate minimizer of the empirical, cross-fitted analogue of \eqref{eq:mtri-pop}:
    \begin{equation*}
        \hat{q}_k\in\arg\min_{q'\in\mathcal{Q}_k}\widehat{\mathcal{L}}_k(q')
        ,
    \end{equation*}
    where $\widehat{\mathcal{L}}_k$ replaces $\mathbb{E}_P$ by out-of-fold empirical averages and replaces the centering $\mathbb{E}_P[\cdot\mid \mathcal{G}_k]$ by an out-of-fold regression.
\end{definition}

The minimax moment term in \eqref{eq:mtri-pop} targets residual invariance by lower bounding $\mathrm{ALC}_k$; see Appendix~\ref{app:subsec:mtri-alc}.
We use $K_{\mathrm{cf}}$-fold cross-fitting for all nuisances and construct Wald-type confidence intervals (CIs) from cross-fitted influence functions;
see Appendix~\ref{app:subsec:cross-fitting-ci}.

\subsection{Whole Algorithm}
\label{subsec:whole-algorithm}
Based on the discussion above, the full DOLCE procedure can be constructed using Algorithm~\ref{alg:dolce}, named \emph{DOLCE} with Multi-Lag Softmin and MTRI.

\begin{algorithm}[tb]
  \caption{DOLCE with Multi-Lag Softmin and MTRI}
  \label{alg:dolce}
  \begin{algorithmic}
    \REQUIRE Logged data $\mathcal{D}_{\mathrm{lag}} = \{(\boldsymbol{X}_{0i},X_i,A_i,R_i)\}_{i=1}^n$;
    lag set $\mathcal{L}=\{\ell_1,\ldots, \ell_K\}$;
    policy $\pi_\theta$ (fixed for OPE; parameterized for OPL);
    cross-fitting folds $K_{\mathrm{cf}}$;
    MTRI critic class $\mathcal{F}_k$ and model class $\mathcal{Q}_k$;
    penalty $\lambda$;
    softmin temperature $\tau$;
    weight clipping threshold $d$.
    
    \ENSURE OPE estimate $\hat{V}_{\mathrm{DOLCE}}(\pi_{\theta})$ and/or gradient estimate $\widehat{g}_{\mathrm{DOLCE}}(\theta)$.
    
    \STATE Split $[n]$ into $K_{\mathrm{cf}}$ folds $\{l_1,\ldots, l_{K_{\mathrm{cf}}}\}$.
    \FOR{$j=1,\ldots,K_{\mathrm{cf}}$}
        \STATE Let training indices $l_{-j}:=[n]\setminus l_j$.
        \FOR{$k=1,\ldots,K$}
            \STATE (Lag propensity) Fit $\widehat{\bar{\pi}}_{0,k}^{(-j)}(a\mid x^{(k)})\approx P(A=a\mid X^{(k)}=x^{(k)})$ on $l_{-j}$.
            \STATE (Lag target marginal)
            \IF{OPE}
                \STATE Fit $\widehat{\bar{\pi}}_{\theta,k}^{(-j)}(a\mid x^{(k)}) \approx \mathbb{E}[\pi_{\theta}(a\mid X)\mid X^{(k)} = x^{(k)}]$ on $l_{-j}$.
            \ELSIF{OPL}
                \STATE Fit $\widehat{\bar{\pi}}_{\theta,k}^{(-j)}$ and $\widehat{m}_{\theta,k}^{(-j)}(a\mid x^{(k)}) \approx \mathbb{E}[\pi_{\theta}(a\mid X)s_{\theta}(a\mid X)\mid X^{(k)}]$ on $l_{-j}$.
            \ENDIF
        \ENDFOR
    \ENDFOR
  \end{algorithmic}
\end{algorithm}
\section{Experiments}
\label{sec:experiments}

\subsection{Synthetic Data Experiments}
\label{subsec:syn-experiments}

We begin with controlled synthetic contextual bandits where
(i) the true policy value is known up to Monte Carlo error and
(ii) we can continuously increase current-context overlap violations while preserving overlap at the lag level.
This setting isolates the failure mode of standard OPE/OPL methods under support violations (Section~\ref{subsec:support-violation}) and directly tests the bias-cancellation mechanism of DOLCE (Section~\ref{sec:proposed-method}).

\subsubsection{Data Generating Process}
\label{subsubsec:syn-dgp}
We generate i.i.d.\ tuples $(X^{(1)}, X, A, R)$ with one lag ($K=1$).
The lag context is sampled as $X^{(1)}\in \mathbb{R}^d\sim N(0,I_d)$ and the current context is sampled as $X\sim N(X^{(1)}, 3^2 I_d)$.
To create overlap violations only at the current context while maintaining lag overlap, we overwrite the first coordinate so that $X_1\sim N(0,3^2)$ independently of $X^{(1)}$.

Rewards follow a mixture of current and lag effects:
\begin{equation*}
    q(X,X^{(1)},a)
    =
    \lambda g(X,a)
    +
    (1-\lambda)h(X^{(1)},a)
    +
    \eta u(X,X^{(1)},a)
    ,
\end{equation*}
where $g$ and $h$ are piecewise-constant functions of thresholded features and $u$ is an optional interaction term used to control violations of residual invariance ($\eta = 0$ yields an additive structure).
Observed rewards are $R = q(X,X^{(1)},A) + \varepsilon$ with $\varepsilon\sim N(0,1)$.

The logging policy depends only on the current context (current-action sufficiency): $\pi_0(a\mid X)\propto\exp\{\beta g(X,a)\}$.
To impose current-context support violations, for the top $r$ fraction of samples in $X_1$ (where $r$ is the support violation ratio), we deterministically set $A=0$ and enforce $\pi_0(0\mid X) = 1$ and $\pi_0(a\mid X)=0$ for all $a\ne 0$.
Because $X_1$ is independent of $X^{(1)}$, this construction induces severe violations of $\pi_0(\cdot\mid X)$ while keeping $\bar{\pi}_{0,1}(\cdot\mid X^{(1)})$ supported for all actions as long as $r<1$.

Unless otherwise stated, we use the default settings:
number of logged samples $n=1,000$, number of actions $|\mathcal{A}| = 5$, feature dimension $d=10$, support violation ratio $r=50\%$, mixture parameter $\lambda = 0.5$, interaction strength $\eta = 0$, and logging-temperature $\beta = 0.3$.
We report additional implementation details and the exact functional forms of $g$, $h$, $u$ in Appendix~\ref{app:subsubsec:syn-dgp}.

\subsubsection{Evaluation Metrics}
\label{subsubsec:syn-metrics}

\paragraph{OPE.}
For each method, we evaluate the mean-squared error (MSE) of the estimated policy value:
\begin{equation*}
    \mathrm{MSE}
    :=
    \mathbb{E}\Bigl[
        \bigl(
            \widehat{V} - V(\pi_\theta)
        \bigr)^2
    \Bigr]
    ,
\end{equation*}
approximated by Monte Carlo averaging over repeated datasets.
We also evaluate coverage of nominal $95\%$ confidence intervals as a function of the support violation ratio $r$.
For each run, we form a Wald-type interval $\widehat{V}\pm z_{0.975}\widehat{\mathrm{SE}}$, $\widehat{\mathrm{SE}} = \sqrt{{n^{-2}}\sum_{i=1}^n \hat{\phi}_i^2}$, where $\hat{\phi}_i$ is the cross-fitted influence-function estimate (equivalently, the per-sample contribution centered by $\widehat{V}$).
Coverage is the fraction of runs in which the interval contains the true value $V(\pi_\theta)$.
We use $100$ Monte Carlo replications per configuration.
Sensitivity results for bias/variance/MSE/coverage when varying $\lambda$, $|\mathcal{A}|$, $n$, and $\eta$ are reported in Appendix~\ref{app:subsubsec:syn-ope-sensitivity}.

\paragraph{OPL.}
For OPL we evaluate learned policies on a large, independently generated test set where the true reward function $q$ is known.
Let $\widehat{\pi}$ be the learned policy and let $\pi_0$ be the logging policy.
Define the oracle best-in-class value (on the test distribution) as $V^{\star} := \mathbb{E}[\max_{a\in\mathcal{A}}q(X,X^{(1)},a)]$.
We report:
\begin{equation*}
    \begin{split}
        \text{Normalized Improvement (NI)}
        &:=
        \frac{V(\widehat{\pi}) - V(\pi_0)}{V^\star - V(\pi_0)}
        ,
        \\
        \text{One-step Improvement (OSI)}
        &:=
        V(\pi_{\theta_0+\alpha \hat{g}})-V(\pi_{\theta_0})
        ,
        \\
        \text{Regret}
        &:=
        V^\star - V(\widehat{\pi})
        ,
    \end{split}
\end{equation*}
where $\theta_0$ is a common policy-network initialization, $\hat{g}$ is the estimated policy gradient (IPS/DR/DOLCE), and $\alpha$ is the step size used by the optimizer.
NI measures value gain relative to the logging policy after normalizing by the achievable gap, OSI isolates the quality of the learned gradient direction, and regret quantifies distance to the oracle greedy policy.
We use $50$ Monte Carlo replications per configuration.
Additional sensitivity results (NI/OSI/regret) for varying $\lambda$, $|\mathcal{A}|$, $n$ and $\eta$ are provided in Appendix~\ref{app:subsubsec:syn-opl-sensitivity}.

\subsubsection{Results}
\label{subsubsec:syn-results}

\paragraph{OPE under support violation.}
Figure~\ref{fig:syn-ope} (a) shows that as the support violation ratio $r$ increases, the MSE of DM, IPS, and DR rises sharply with deteriorating overlap, while DOLCE stays near zero over most of the range, indicating robust lag-overlap bias cancellation;
only at very large $r$ does DOLCE exhibit a slight MSE increase, consistent with mild variance inflation under more extreme reweighting (Figure~\ref{fig:syn-ope} (c)).
DM and DR remain competitive under moderate violations, whereas IPS is consistently poor, aligning with Lemma~\ref{lem:bias-support}:
IPS incurs unavailable bias proportional to unsupported target mass, while DM/DR can remain accurate when the reward model is (approximately) correct on relevant regions, including parts of the unsupported set.
Figure~\ref{fig:syn-ope} (b) shows that DOLCE is essentially unbiased across $r$, whereas IPS bias grows with $r$ and DM/DR become sensitive to reward-model misspecification on unsupported actions as overlap worsens.
Figure~\ref{fig:syn-ope} (c) shows that DOLCE’s variance increases only in the high-violation regime, consistent with Proposition~\ref{prop:var-dolce-ope-oracle} and explaining the small MSE increase in (a).
Figure~\ref{fig:syn-ope} (d) shows that $95\%$ CI coverage for IPS/DM/DR becomes unstable and drops as violations intensify due to non-negligible bias unaccounted for by standard variance estimates, while DOLCE maintains near-nominal coverage via influence-function-based inference enabled by lag-based unbiasedness.

\begin{figure*}[tb]
\vskip 0.2in
\begin{center}
\centerline{\includegraphics[width=\columnwidth]{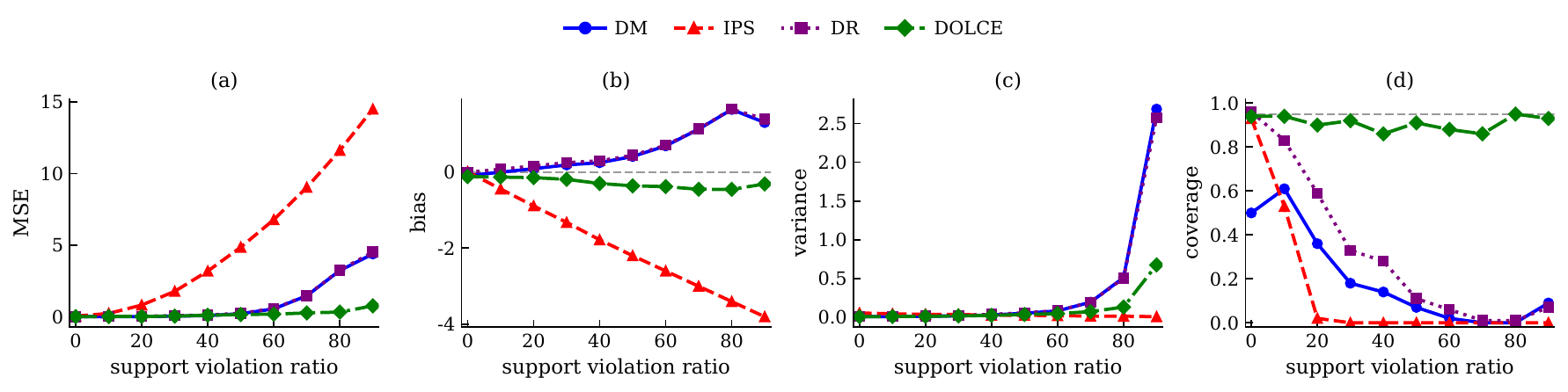}}
\caption{
Off-policy evaluation under support violation.
(a) Mean squared error (MSE) of the estimated policy value $\hat{V}(\pi)$ as the support violation ratio $r$ increases.
(b) Bias of $\hat{V}(\pi)$ versus $r$, where the horizontal gray dashed line corresponds to zero bias.
(c) Variance of $\hat{V}(\pi)$ versus $r$.
(d) Empirical coverage of nominal $95\%$ confidence intervals for $V(\pi)$ as a function of $r$, where the horizontal gray dashed line denotes the nominal $95\%$ coverage level.
}
\label{fig:syn-ope}
\end{center}
\vskip -0.2in
\end{figure*}

\paragraph{OPL under support violation.}
Figure~\ref{fig:syn-opl} summarizes OPL performance as $r$ increases.
In Figure~\ref{fig:syn-opl} (a), DOLCE yields larger normalized improvements in the high-violation regime, indicating that the learned policy tends to improve more reliably over the logging policy when current-context overlap is poor.
Figure~\ref{fig:syn-opl} (b) further shows that DOLCE attains higher one-step improvements, suggesting that its lag-marginal gradient correction better aligns the update direction with the true value gradient under support violations.
Finally, Figure~\ref{fig:syn-opl} (c) reports regret:
while IPS/DR-based learners exhibit increasing regret as $r$ grows, DOLCE remains comparatively stable, indicating reduced degradation of optimization quality when overlap collapses.
At the same time, DOLCE does not uniformly dominate all baselines across all regimes (e.g., when $r$ is small, standard methods can be competitive);
rather, the main takeaway is that DOLCE more consistently steers OPL in a favorable direction under severe support violations.

\begin{figure*}[tb]
\vskip 0.2in
\begin{center}
\centerline{\includegraphics[width=\columnwidth]{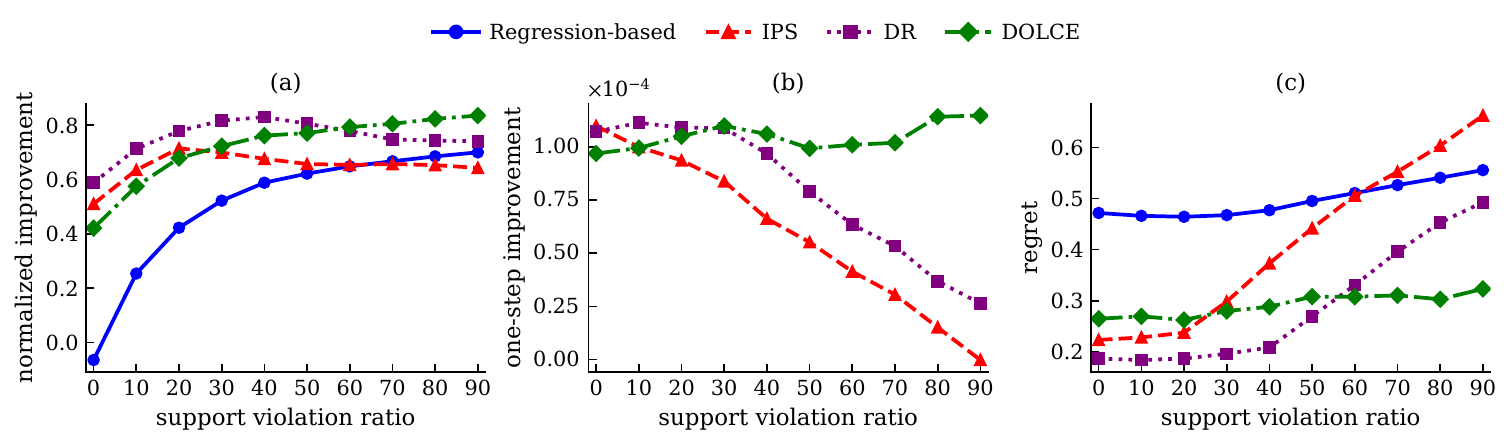}}
\caption{
Off-policy learning under support violation.
(a) Normalized improvement over the logging policy as a function of the support violation ratio $r$.
(b) One-step improvement versus $r$, reflecting the alignment between the policy update direction and the true value gradient.
(c) Regret as $r$ increases.
}
\label{fig:syn-opl}
\end{center}
\vskip -0.2in
\end{figure*}

\subsection{Real-World Experiments}
\label{subsec:real-experiments}

\subsubsection{Dataset}
\label{subsubsec:dataset}
We conduct real-world experiments on the Northwestern ICU dataset (NWICU) \citep{nwicu010}, a deidentified electronic health record (EHR) database that contains ICU stay information together with time-stamped bedside charting and medication administration records.
NWICU provides repeated measurements over time within an ICU stay, naturally matching the lagged contextual bandit formulation in Section~\ref{subsec:lagged-formulation}.
We restrict to ICU stays with valid admission/discharge timestamps and sufficient vital sign measurements;
full cohort construction is described in Appendix~\ref{app:subsec:realworld-nwicu}.

\subsubsection{Contextual Bandit Construction}
\label{subsubsec:contextual-bandit-construction}
We discretize each ICU stay into hourly decision points ($\Delta t=1$h).
At each decision time $t$, the current context $X_t$ summarizes routinely charted vitals (HR, SpO$_2$, SBP, DBP) and MAP computed as $\mathrm{MAP}=(\mathrm{SBP}+2\mathrm{DBP})/3$.
We construct $K=3$ lag contexts at $\{1,2,4\}$ hours and drop decision points without all lags; details (item IDs, range filters, last-observation-carried-forward within a 2-hour tolerance, and window alignment) are given in Appendix~\ref{app:subsec:realworld-nwicu}.

The action $A_t\in\{0,1\}$ indicates whether any vasopressor is administered within $[t,t+1\mathrm{h})$, identified from eMAR medication names.
The reward is a short-horizon hemodynamic proxy $R_t=\mathbf{1}\{\mathrm{MAP}_{t+1\mathrm{h}}\ge 65\}$.
To better match the i.i.d.\ bandit setup, we subsample one decision point per ICU stay before splitting train/evaluation sets, so the effective sample size equals the number of unique ICU stays (Appendix~\ref{app:subsec:realworld-nwicu}).

\subsubsection{Target Policies and Overlap Stress}
\label{subsubsec:realworld-target-policy}
Because the counterfactual value is unobservable in NWICU, we evaluate clinically motivated stochastic target policies that deviate from clinician behavior.
Motivated by common hemodynamic targets around MAP $\approx 65$\,mmHg and vasopressor use in hypotension \citep{green20132012,white2025mean}, we use a heuristic rule-based stochastic policy: $\pi_{\theta}(A_t=1\mid X_t) = \sigma\{(65-\mathrm{MAP}_t)/5 + (\mathrm{HR}_t-110)/30\}$, where $\sigma(\cdot)$ is the logistic sigmoid and probabilities are clipped away from $0$ and $1$ for numerical stability.
The HR term and scaling constants are chosen heuristically to produce smooth probability changes around clinically relevant ranges.

\subsubsection{Evaluation Protocol and Baselines}
\label{subsubsec:realworld-evaluation}
For OPE, we report $\widehat{V}(\pi_\theta)$ with Wald-type confidence intervals using cross-fitted nuisance estimates.
We compare DM, IPS, DR, and DOLCE, and report ESS for importance-weighted estimators.
For OPL, we learn policies by optimizing each objective on the training split, yielding $\hat{\pi}_{\mathrm{DM}},\hat{\pi}_{\mathrm{IPS}},\hat{\pi}_{\mathrm{DR}},\hat{\pi}_{\mathrm{DOLCE}}$, and evaluate them on the held-out split via OPE (Appendix~\ref{app:subsec:realworld-nwicu}).

\subsubsection{Results}
\label{subsubsec:realworld-results}

Table~\ref{tab:nwicu-ope} reports OPE results for the vasopressor target policy.
All estimators yield values around $0.95$, indicating that the one-hour MAP threshold is frequently achieved in this cohort and horizon (a ceiling-effect regime for further improvements).
IPS/DR exhibit larger uncertainty than DM, consistent with variance inflation from reweighting under limited overlap; this is reflected by reduced ESS.

Table~\ref{tab:nwicu-opl} summarizes OPL outcomes by evaluating learned policies with the DOLCE OPE estimator.
Across learned policies, estimated values are close and confidence intervals overlap, suggesting limited detectable differences under the current reward/horizon definition.
Notably, IPS-based evaluation can be unstable (e.g., producing out-of-range estimates for binary rewards in some settings; see Appendix~\ref{app:subsec:realworld-nwicu:opl-full}), reinforcing the importance of overlap-aware estimators and diagnostics when comparing learned policies offline.

\begin{table}[tb]
\centering
\caption{
OPE results on NWICU for the vasopressor target policy.
ESS denotes the effective sample size computed from realized importance weights for IPS/DR (and reported as $n$ for non-importance-weighted estimators in this table).
}
\label{tab:nwicu-ope}
\begin{tabular}{lrrr}
\toprule
Estimator & $\widehat{V}$ & 95\%CI & ESS \\
\midrule
DM    & $0.953$ & $[0.951,0.955]$ & $16043.0$ \\
IPS   & $0.954$ & $[0.949,0.959]$ & $13675.9$ \\
DR    & $0.953$ & $[0.948,0.958]$ & $13675.9$ \\
DOLCE & $0.953$ & $[0.950,0.957]$ & $16043.0$ \\
\bottomrule
\end{tabular}
\end{table}

\begin{table}[t]
\centering
\caption{
OPL results on NWICU: value of learned policies evaluated by DOLCE on the held-out split.
}
\label{tab:nwicu-opl}
\begin{tabular}{lrr}
\toprule
Learned policy & $\widehat{V}$ & 95\%CI \\
\midrule
$\hat{\pi}_{\mathrm{DM}}$    & $0.944$ & $[0.906,0.981]$ \\
$\hat{\pi}_{\mathrm{IPS}}$   & $0.946$ & $[0.926,0.967]$ \\
$\hat{\pi}_{\mathrm{DR}}$    & $0.953$ & $[0.945,0.962]$ \\
$\hat{\pi}_{\mathrm{DOLCE}}$ & $0.949$ & $[0.937,0.962]$ \\
\bottomrule
\end{tabular}
\end{table}
\section{Conclusion}
\label{sec:conclusion}

We proposed \textbf{DOLCE}, a lag-aware framework for OPE and OPL under current-context support violations.
DOLCE replaces current-context importance weights with lag-marginal ratios and achieves exact bias cancellation under lag overlap and a residual-invariance condition, while enabling asymptotically valid inference via cross-fitting.
We further introduced a practical estimation strategy (MTRI) that targets residual invariance through moment conditions and a softmin aggregation mechanism to implement a one-good-lag principle in multi-lag settings.
In synthetic experiments, DOLCE substantially improves OPE robustness and maintains near-nominal confidence interval coverage as overlap deteriorates, and it produces more reliable optimization behavior in OPL under severe support violations.
\section*{Code availability}
The Python implementation of proposed method and simulation experiments in this study is available at \href{https://github.com/shutech2001/DOLCE}{\texttt{https://github.com/shutech2001/DOLCE}}.
\section*{Acknowledgements}
\label{sec:acknowledgements}
We would like to thank Masanori Nojima for their valuable comments on this article.
Shu Tamano was supported by JSPS KAKENHI Grant Numbers 25K24203.

\bibliographystyle{apalike}
\bibliography{bibliography}

\newpage
\appendix
\section{Related Work}
\label{app:sec:related-work}

\subsection{Off-Policy Evaluation and Learning in Contextual Bandits and Reinforcement Learning}
\label{app:subsec:ope-opl-related}
Off-policy evaluation (OPE) and off-policy learning (OPL) aim to evaluate or optimize a target policy using logged data collected under a different logging (behavior) policy.
In contextual bandits, most studies use importance weighting, reward modeling, and doubly robust estimation, including variants for finite-sample control and large action spaces \citep{dudik2011doubly,wang2017optimal,farajtabar2018more,su2019cab,su2020doubly,kallus2021optimal,metelli2021subgaussian,kiyohara2022doubly,kiyohara2023off,saito2022off,saito2023off,saito2025potec}.
OPL is commonly addressed by reward modeling followed by greedy decision rules, or by gradient-based optimization with off-policy corrections \citep{swaminathan2015batch,swaminathan2015counterfactual,joachims2018deep}.

OPE is also central in reinforcement learning (RL), where the goal is to evaluate a target policy from trajectories generated by a behavior policy.
Many methods develop importance-sampling-based estimators, doubly robust estimators, and high-confidence procedures under Markov Decision Process structure \citep{jiang2016doubly, thomas2016data,liu2018breaking,xie2019towards, kallus2020double,liu2020understanding,kiyohara2024towards}.
Offline reinforcement learning further emphasizes distribution shift and extrapolation issues when the target policy induces regions poorly covered by logged data.

\subsection{Positivity and Overlap Violations}
\label{app:subsec:positivity-violate}
Most point-identification results for policy value rely on a positivity or overlap condition, which requires that any action used by the target policy has nonzero probability under the logging policy at the relevant context, or that the target state-action distribution is supported by the logged data.
When overlap fails, importance-weighted estimators can become unstable and complete separation can induce irreducible bias because the data contain no information about rewards in unsupported regions.
This issue is widely discussed in causal inference, together with diagnostics and practical responses to limited overlap \citep{crump2009dealing,petersen2012diagnosing,d2021overlap}.
Common remedies such as trimming, truncation, or modified weighting can stabilize estimation, but they typically change the target population or avoid making point claims about fully unsupported subpopulations \citep{hill2013assessing,yang2018asymptotic,zhu2021core}.
This is misaligned with applications where the aim is to evaluate or intervene on precisely those unsupported regions, for example hidden positives.

\subsection{Methods Tailored to Overlap Violations}
\label{app:subsec:methods-for-overlap-violate}
One approach is to optimize conservative objectives such as lower confidence bounds or pessimistic value estimates.
These methods provide safety guarantees by avoiding extrapolation into unsupported regions, and are widely used in offline RL through pessimism or behavior regularization \citep{fujimoto2019off, kumar2019stabilizing, kumar2020conservative}.
Related perspectives also appear in contextual bandits and policy learning without overlap, where the objective targets a lower bound rather than the point value \citep{khan2024off, jin2025policy}.
These approaches are valuable when conservative decision making is the goal, but the optimized policy need not maximize the original policy value.

\subsubsection{Side Information and Structured Generalization}
\label{app:subsubsec:side-information}
Another approach exploits additional structure to generalize across actions or contexts.
In large action spaces, methods can leverage action features or embeddings to share statistical strength across actions and reduce the impact of sparse coverage \citep{saito2022off,felicioni2022off,gui2022causal,saito2023off,saito2025potec}.
These methods can be effective when suitable side information is available and modeling assumptions are appropriate, but they require additional inputs or structure beyond standard logged bandit tuples.
Moreover, embedding information may introduce new sources of confounding.

\subsubsection{Changing the Target Intervention to Relax Positivity}
\label{app:subsubsec:changing-target}
A different line of work changes the target intervention to one that remains compatible with the support of the logging distribution.
Incremental propensity score interventions provide a principled way to define such targets in causal inference \citep{kennedy2019nonparametric}.
Recent work uses the idea for positivity-free policy learning, which yields well-defined learning objectives even when standard positivity fails \citep{zhao2024positivity}.
These methods address positivity by redefining the target intervention, which can be desirable in some applications but differs from evaluating the original target policy.

\subsubsection{Partial Identification and Value Bounds}
\label{app:subsubsec:partial-identification}
When overlap fails, partial identification characterizes an identification region for the policy value under weak assumptions.
Classical results provide sharp bounds on treatment effects and show how additional assumptions can tighten identification regions \citep{manski1990nonparametric,balke1997bounds}.
Analogous questions arise in OPE under overlap violations, where one can derive computable value bounds, sometimes under smoothness or other regularity assumptions \citep{khan2024off,jin2025policy}.
This perspective clarifies what can and cannot be learned from data when some regions are unobserved.

\subsection{Positioning of DOLCE}
\label{app:subsec:position-dolce}
DOLCE addresses complete separation in contextual bandits by exploiting temporal structure in logged interactions.
Rather than optimizing conservative lower bounds, redefining the intervention, or requiring action-side features, DOLCE uses routinely available lagged contexts and constructs estimators that target the original policy value.
Its key requirements are a lag-overlap condition and a reward-model property formalized as residual invariance, which generalizes local correctness \citep{saito2022off,saito2025potec}.
This yields a complementary route to handling support violations within a point-estimation framework under explicit modeling conditions.
\section{Supplementary Technical Material}
\label{app:sec:supplementary-technical}

\subsection{Gradient-based OPL baselines}
\label{app:subsec:baseline-gradients}
For completeness, the IPS and DR policy-gradient estimators corresponding to \eqref{eq:ips} and \eqref{eq:dr} are
\begin{align*}
    \widehat{\nabla_{\theta} V}_{\mathrm{IPS}}(\pi_\theta)
    &:=
    \frac{1}{n}\sum_{i=1}^{n}\frac{\pi_{\theta}(A_i\mid X_i)}{\pi_0(A_i\mid X_i)}R_is_{\theta}(A_i\mid X_i)
    ,
    \\
    \widehat{\nabla_{\theta}V}_{\mathrm{DR}}(\pi_{\theta})
    &:=
    \frac{1}{n}\sum_{i=1}^{n}
    \Biggl[
        \frac{\pi_{\theta}(A_i\mid X_i)}{\pi_0(A_i\mid X_i)}\{R_i-\hat{q}(X_i, A_i)\}s_{\theta}(A_i\mid X_i)
        \\
        &\qquad
        + \sum_{a\in \mathcal{A}}\pi_{\theta}(a\mid X_i)\hat{q}(X_i, a)s_{\theta}(a\mid X_i)
    \Biggr]
    .
\end{align*}

\subsection{Impact of Support Violation}
\label{app:subsec:impact-of-support-violation}

\begin{lemma}[Bias under support violation]
\label{lem:bias-support}
    Under Assumptions~\ref{ass:ci1:consistency} and \ref{ass:ci2:unconfoundedness}, the following hold.
    Let $\Delta_{\hat{q}, q}(x,a) := \hat{q}(x,a) - q(x,a)$.
    \begin{enumerate}
        \item The bias of $\hat{V}_{\mathrm{IPS}}(\pi_{\theta})$ satisfies
        \begin{equation*}
            \mathbb{E}_{P}\Bigl[
                \hat{V}_{\mathrm{IPS}}(\pi_{\theta})
            \Bigr] - V(\pi_{\theta})
            =
            -\mathbb{E}_{p(x)}\Biggl[
                \sum_{a\in\mathcal{U}(X;\pi_\theta,\pi_0)}\pi_{\theta}(a\mid X)q(X,a)
            \Biggr]
            .
        \end{equation*}
        \item The bias of $\hat{V}_{\mathrm{DR}}(\pi_{\theta})$ satisfies
        \begin{equation*}
            \mathbb{E}_{P}\Bigl[
                \hat{V}_{\mathrm{DR}}(\pi_{\theta})
            \Bigr] - V(\pi_{\theta})
            =
            \mathbb{E}_{p(x)}\Biggl[
                \sum_{a\in\mathcal{U}(X;\pi_{\theta},\pi_0)}\pi_{\theta}(a\mid X)\Delta_{\hat{q},q}(X,a)
            \Biggr]
            .
        \end{equation*}
        \item The bias of $\widehat{\nabla_{\theta}V}_{\mathrm{IPS}}(\pi_{\theta})$ satisfies
        \begin{equation*}
            \mathbb{E}_{P}\Bigl[
                \widehat{\nabla_{\theta} V}_{\mathrm{IPS}}(\pi_{\theta})
            \Bigr]-\nabla_{\theta} V(\pi_{\theta})
            =
            -\mathbb{E}_{p(x)}\Biggl[
                \sum_{a\in\mathcal{U}(X;\pi_{\theta},\pi_0)}\pi_{\theta}(a\mid X)q(X,a)s_{\theta}(a\mid X)
            \Biggr]
            .
        \end{equation*}
        \item The bias of $\widehat{\nabla_{\theta}V}_{\mathrm{DR}}(\pi_\theta)$ satisfies
        \begin{equation*}
            \mathbb{E}_{P}\Bigl[
                \widehat{\nabla_{\theta}V}_{\mathrm{DR}}(\pi_{\theta})
            \Bigr] - \nabla_{\theta}V(\pi_{\theta})
            =
            \mathbb{E}_{p(x)}\Biggl[
                \sum_{a\in\mathcal{U}(X;\pi_{\theta},\pi_0)}\pi_\theta(a\mid X)\Delta_{\hat{q},q}(X,a)s_{\theta}(a\mid X)
            \Biggr]
            .
        \end{equation*}
    \end{enumerate}
\end{lemma}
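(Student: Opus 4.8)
The plan is to compute each bias by a single direct expectation, exploiting the factorization $P(\dd x,\dd a,\dd r)=p(x)\pi_0(a\mid x)p(r\mid x,a)\,\dd x\,\dd a\,\dd r$ from Assumption~\ref{ass:ci2:unconfoundedness}. Since the summands are i.i.d., I would first reduce $\mathbb{E}_P[\hat{V}_{\mathrm{IPS}}(\pi_\theta)]$ to the one-draw expectation $\mathbb{E}_P[\frac{\pi_\theta(A\mid X)}{\pi_0(A\mid X)}R]$. The key observation is that integrating against the logging law places the factor $\pi_0(a\mid x)$ in front of each term, which exactly cancels the denominator of the importance ratio: $\frac{\pi_\theta(a\mid x)}{\pi_0(a\mid x)}\pi_0(a\mid x)=\pi_\theta(a\mid x)$ wherever $\pi_0(a\mid x)>0$, while actions with $\pi_0(a\mid x)=0$ receive zero logging mass and drop out of the sum. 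Using consistency (Assumption~\ref{ass:ci1:consistency}) to replace $\mathbb{E}[R\mid X=x,A=a]$ by $q(x,a)$, this gives $\mathbb{E}_P[\hat{V}_{\mathrm{IPS}}]=\mathbb{E}_{p(x)}[\sum_{a:\pi_0(a\mid X)>0}\pi_\theta(a\mid X)q(X,a)]$. Subtracting $V(\pi_\theta)=\mathbb{E}_{p(x)}[\sum_{a\in\mathcal{A}}\pi_\theta(a\mid X)q(X,a)]$ leaves exactly the actions with $\pi_\theta(a\mid X)>0$ and $\pi_0(a\mid X)=0$, i.e.\ $\mathcal{U}(X;\pi_\theta,\pi_0)$, yielding part~1.

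For part~2, I would apply the same support-splitting to the residual term of $\hat{V}_{\mathrm{DR}}$. The multiplier argument gives $\mathbb{E}_P[\frac{\pi_\theta(A\mid X)}{\pi_0(A\mid X)}\{R-\hat{q}(X,A)\}]=\mathbb{E}_{p(x)}[\sum_{a:\pi_0(a\mid X)>0}\pi_\theta(a\mid X)\{q(X,a)-\hat{q}(X,a)\}]$, and the direct-method term contributes $\mathbb{E}_{p(x)}[\sum_{a\in\mathcal{A}}\pi_\theta(a\mid X)\hat{q}(X,a)]$. Splitting this second sum into its supported and unsupported parts, the supported $\hat{q}$ contributions cancel against the $-\hat{q}$ in the residual term, leaving $q$ on the supported set plus $\hat{q}$ on the unsupported set. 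Subtracting $V(\pi_\theta)$ then collapses the supported block entirely and leaves $\sum_{a\in\mathcal{U}(X;\pi_\theta,\pi_0)}\pi_\theta(a\mid X)\{\hat{q}(X,a)-q(X,a)\}=\sum_{a\in\mathcal{U}(X;\pi_\theta,\pi_0)}\pi_\theta(a\mid X)\Delta_{\hat{q},q}(X,a)$, which is part~2.

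Parts~3 and~4 are structurally identical: I would insert the score $s_\theta(a\mid X)$ into the same computations, using the log-derivative identity $\nabla_\theta V(\pi_\theta)=\mathbb{E}_P[\sum_{a}\pi_\theta(a\mid X)q(X,a)s_\theta(a\mid X)]$ stated earlier as the reference target. The cancellation of $\pi_0$ against the importance denominator and the supported/unsupported split proceed verbatim, with $s_\theta$ carried through as an inert multiplier, producing the two claimed gradient expressions.

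The only genuinely delicate point is the treatment of the importance weight on the unsupported set, where $\pi_0(a\mid x)=0$ makes $\pi_\theta(a\mid x)/\pi_0(a\mid x)$ formally undefined. The resolution I would make explicit is that every expectation integrates against $\pi_0(a\mid x)$, so unsupported actions carry zero logging mass and contribute nothing regardless of the nominal weight value; equivalently, the realized action $A$ lies in the support set $P$-a.s., so $\hat{V}_{\mathrm{IPS}}$ and $\hat{V}_{\mathrm{DR}}$ are well-defined almost surely, and the products $\frac{\pi_\theta(a\mid x)}{\pi_0(a\mid x)}\pi_0(a\mid x)$ are to be read as $\pi_\theta(a\mid x)\mathbf{1}\{\pi_0(a\mid x)>0\}$. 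Once this convention is fixed, all four identities reduce to the bookkeeping above.
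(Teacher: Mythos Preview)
Your proposal is correct and mirrors the paper's own proof essentially step for step: reduce to a one-draw expectation, cancel $\pi_0$ against the importance denominator on the supported set, note that unsupported actions carry zero logging mass, subtract $V(\pi_\theta)$, and then repeat with the score $s_\theta$ inserted for the gradient claims. Your explicit discussion of the $0/0$ convention on $\mathcal{U}(X;\pi_\theta,\pi_0)$ is a slightly more careful version of the paper's one-line remark that the restriction to $\pi_0(a\mid X)>0$ is harmless.
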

See Appendix~\ref{app:proof-bias-support} for the proof of Lemma~\ref{lem:bias-support}.

Lemma~\ref{lem:bias-support} highlights that, under support violation, bias is unavoidable unless either
(i) unsupported actions have negligible mass under $\pi_{\theta}$ or
(ii) the reward model error is controlled on unsupported actions, both of which are challenging in practice.

\subsection{Relating $q$ and $q_k$}
\label{app:subsec:q-qk-relation}

\begin{lemma}[Relating $q$ and $q_k$ under current-action sufficiency]
\label{lem:q-qk-relation}
    Fix a lag $k\in [K]$ and suppose Assumption~\ref{ass:current-action-sufficiency} holds.
    Then, for each $a\in \mathcal{A}$,
    \begin{equation*}
        q(X,a)
        =
        \mathbb{E}_P\bigl[q_k(X,X^{(k)},a)\mid X\bigr]
        \quad
        P\text{-a.s.}
    \end{equation*}
    Consequently,
    \begin{equation*}
        V(\pi_\theta)
        =
        \mathbb{E}_P\Biggl[
            \sum_{a\in\mathcal{A}}\pi_{\theta}(a\mid X)q_k(X,X^{(k)},a)
        \Biggr]
        .
    \end{equation*}
\end{lemma}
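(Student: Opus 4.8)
The plan is to prove the pointwise identity $q(X,a) = \mathbb{E}_P[q_k(X,X^{(k)},a)\mid X]$ first and then obtain the value identity by integrating against $\pi_\theta$. To get the pointwise identity, I would begin from the definitions $q(x,a) = \mathbb{E}_P[R\mid X=x, A=a]$ and $q_k(x,x^{(k)},a) = \mathbb{E}_P[R\mid X=x, X^{(k)}=x^{(k)}, A=a]$, and apply the tower property with respect to the nested $\sigma$-fields $\sigma(X,A)\subseteq\sigma(X,X^{(k)},A)$. This gives
\[
    \mathbb{E}_P[R\mid X,A]
    =
    \mathbb{E}_P\bigl[\mathbb{E}_P[R\mid X,X^{(k)},A]\mid X,A\bigr]
    =
    \mathbb{E}_P\bigl[q_k(X,X^{(k)},A)\mid X,A\bigr],
\]
and evaluating at $A=a$ yields $q(X,a) = \mathbb{E}_P[q_k(X,X^{(k)},a)\mid X, A=a]$ $P$-a.s.

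The crux of the argument is then to remove $A$ from the conditioning set. Here I would invoke current-action sufficiency (Assumption~\ref{ass:current-action-sufficiency}), i.e.\ $A\indep X^{(k)}\mid X$. This conditional independence is equivalent to the statement that the conditional law of $X^{(k)}$ given $(X,A)$ coincides with its conditional law given $X$ alone; consequently, for any fixed $a$ the integrand $q_k(X,X^{(k)},a)$ is a measurable function of $(X,X^{(k)})$ whose conditional expectation is unaffected by further conditioning on $A$:
\[
    \mathbb{E}_P\bigl[q_k(X,X^{(k)},a)\mid X, A=a\bigr]
    =
    \mathbb{E}_P\bigl[q_k(X,X^{(k)},a)\mid X\bigr].
\]
Chaining this with the previous display establishes the claimed pointwise identity.

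For the consequence, I would start from $V(\pi_\theta) = \mathbb{E}_P[\sum_{a\in\mathcal{A}}\pi_\theta(a\mid X)q(X,a)]$, substitute the pointwise identity, use $\sigma(X)$-measurability of $\pi_\theta(a\mid X)$ to pull it inside the inner conditional expectation, and collapse the nested expectations by the tower property:
\[
    V(\pi_\theta)
    =
    \mathbb{E}_P\Bigl[\sum_{a\in\mathcal{A}}\pi_\theta(a\mid X)\,\mathbb{E}_P[q_k(X,X^{(k)},a)\mid X]\Bigr]
    =
    \mathbb{E}_P\Bigl[\sum_{a\in\mathcal{A}}\pi_\theta(a\mid X)\,q_k(X,X^{(k)},a)\Bigr].
\]

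The main obstacle is the middle step, namely justifying the deletion of $A$ from the conditioning event. This is exactly where Assumption~\ref{ass:current-action-sufficiency} is indispensable: without it, the conditional distribution of the lagged context could depend on the realized action, the two conditional expectations above would differ, and the identity would break down. The remaining steps are routine applications of iterated expectations and the $\sigma(X)$-measurability of $\pi_\theta(\cdot\mid X)$.
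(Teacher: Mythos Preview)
Your proposal is correct and follows essentially the same argument as the paper's proof: apply the tower property to write $q(X,a)=\mathbb{E}_P[q_k(X,X^{(k)},a)\mid X,A=a]$, use $A\indep X^{(k)}\mid X$ to drop $A$ from the conditioning, and then multiply by $\pi_\theta(a\mid X)$, sum over $a$, and take expectations. The paper's version is slightly terser (it does not spell out the $\sigma$-field nesting or the $\sigma(X)$-measurability of $\pi_\theta$), but the ideas and their order are identical.
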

See Appendix~\ref{app:proof-q-qk-relation} for the proof of Lemma~\ref{lem:q-qk-relation}.

\subsection{Residual Invariance: Equivalent Characterizations}
\label{app:subsec:residual-invariance}
Assumption~\ref{ass:ri} is equivalent to $\Delta_k(X,X^{(k)},A;\tilde{q}_k)$ being $\sigma(X^{(k)},A)$-measurable $P$-a.s.
Moreover, $\mathrm{ALC}_k(\tilde{q}_k)=0$ if and only if Assumption~\ref{ass:ri} holds.

\subsection{Weight Clipping}
\label{app:subsec:weight-clipping}
In practice, we may use clipped lag weights to stabilize finite-sample behavior:
\begin{equation*}
    w_{k}^{(d)}(x^{(k)},a)
    :=
    \min\{w_k(x^{(k)},a), d\}
    \quad
    d<\infty
    .
\end{equation*}
All unbiasedness statements for oracle weights refer to the unclipped weights $w_k$.
Clipping introduces bias through the standard term $\mathbb{E}[(w_k^{(d)}-w_k)\{R-\tilde{q}_k\}]$, which is small when clipping is rare or when $\bar{\pi}_{0,k}$ is uniformly bounded away from $0$.

\subsection{Oracle identities for lag-$k$ DOLCE}
\label{app:subsec:oracle-dolce-identities}

This subsection collects finite-sample oracle identities for the lag-$k$ DOLCE estimator~\eqref{eq:dolce-ope-k}.
Proofs are deferred to Appendix~\ref{app:sec:proofs}.

\begin{proposition}[Bias of lag-$k$ DOLCE under oracle weights]
\label{prop:bias-dolce-ope}
    Fix $k\in[K]$ and let $\tilde{q}_k$ be any measurable function.
    Consider $\hat{V}_k$ in \eqref{eq:dolce-ope-k} with $\hat{w}_k\equiv w_k$ and $\hat{q}_k \equiv \tilde{q}_k$.
    Then
    \begin{equation*}
        \begin{split}
            \mathbb{E}_P\bigl[\hat{V}_k(\pi_\theta)\bigr] - V(\pi_\theta)
            =
            \mathbb{E}_P\Biggl[
                \sum_{a\in\mathcal{A}}\Big\{
                    \pi_0(a\mid X)w_k(X^{(k)},a)-\pi_{\theta}(a\mid X)
                \Big\}
                \Delta_k(X,X^{(k)},a;\tilde{q}_k)
            \Biggr]
            ,
        \end{split}
    \end{equation*}
    where $\pi_0(a\mid X)$ is the logging policy under $P$.
\end{proposition}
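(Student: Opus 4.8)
The plan is to compute $\mathbb{E}_P[\hat{V}_k(\pi_\theta)]$ directly and subtract $V(\pi_\theta)$ in the $q_k$-representation supplied by Lemma~\ref{lem:q-qk-relation}. First I would invoke the i.i.d.\ assumption to reduce the expectation of the empirical average in \eqref{eq:dolce-ope-k} to the expectation of a single summand, and split it into a correction term $T_1 := \mathbb{E}_P[w_k(X^{(k)},A)\{R-\tilde{q}_k(X,X^{(k)},A)\}]$ and a model term $T_2 := \mathbb{E}_P[\sum_{a\in\mathcal{A}} \pi_\theta(a\mid X)\tilde{q}_k(X,X^{(k)},a)]$.

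For $T_1$, the key move is a double conditioning. First condition on $(X,X^{(k)},A)$: since $\mathbb{E}_P[R\mid X,X^{(k)},A]=q_k(X,X^{(k)},A)$ by definition of $q_k$ and $w_k$ is $\sigma(X^{(k)},A)$-measurable, the tower property gives $T_1=\mathbb{E}_P[w_k(X^{(k)},A)\,\Delta_k(X,X^{(k)},A;\tilde q_k)]$. Then write the expectation over $A$ as a sum over actions via the indicators $\mathbf{1}\{A=a\}$ and condition on $(X,X^{(k)})$; here the current-action-sufficiency assumption (Assumption~\ref{ass:current-action-sufficiency}) is precisely what lets me replace $P(A=a\mid X,X^{(k)})$ by $\pi_0(a\mid X)$, yielding $T_1 = \mathbb{E}_P[\sum_{a\in\mathcal{A}} \pi_0(a\mid X)\,w_k(X^{(k)},a)\,\Delta_k(X,X^{(k)},a;\tilde q_k)]$.

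For the model term, I would use Lemma~\ref{lem:q-qk-relation} (itself a consequence of current-action sufficiency) to write $V(\pi_\theta)=\mathbb{E}_P[\sum_{a\in\mathcal{A}} \pi_\theta(a\mid X)\,q_k(X,X^{(k)},a)]$, so that $T_2 - V(\pi_\theta) = -\mathbb{E}_P[\sum_{a\in\mathcal{A}} \pi_\theta(a\mid X)\,\Delta_k(X,X^{(k)},a;\tilde q_k)]$. Adding this to the expression for $T_1$ and collecting the two contributions inside a single sum over $a$ produces exactly the stated identity, with the coefficient $\pi_0(a\mid X)\,w_k(X^{(k)},a)-\pi_\theta(a\mid X)$ multiplying $\Delta_k$.

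I expect the only substantive point---rather than a genuine obstacle---to be careful bookkeeping of which variables each factor depends on, so that the two conditioning steps are valid: $w_k$ and the indicator probabilities are functions of $(X^{(k)},A)$ and $X$ respectively, while $\Delta_k$ retains its full dependence on $(X,X^{(k)},a)$. Common lag support (Assumption~\ref{ass:common-lag-support}) enters only to guarantee that $w_k$ is well-defined and the integrand integrable; it plays no role in the algebraic cancellation. As a final consistency check, I would note that under residual invariance (Assumption~\ref{ass:ri}) the factor $\Delta_k$ reduces to a $\sigma(X^{(k)},A)$-measurable function and the whole expression vanishes, recovering Theorem~\ref{thm:unbiased-dolce-ope-oracle}.
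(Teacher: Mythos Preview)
Your proposal is correct and follows essentially the same route as the paper's proof: both condition first on $(X,X^{(k)},A)$ to replace $R$ by $q_k$, then expand over $A$ given $(X,X^{(k)})$ using $P(A=a\mid X,X^{(k)})=\pi_0(a\mid X)$, and finally subtract $V(\pi_\theta)$ in its $q_k$-representation to collect the $\Delta_k$ terms. Your explicit flagging of Assumption~\ref{ass:current-action-sufficiency} as the step that licenses $P(A=a\mid X,X^{(k)})=\pi_0(a\mid X)$ is exactly right; in the paper's appendix this is baked into the assumed factorization $A\mid X\sim\pi_0(\cdot\mid X)$.
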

See Appendix~\ref{app:proof-bias-dolce-ope} for the proof.

\begin{proposition}[Variance decomposition for oracle lag-$k$ DOLCE]
\label{prop:var-dolce-ope-oracle}
    Fix $k\in[K]$ and consider the oracle lag-$k$ estimator \eqref{eq:dolce-ope-k} with known weights $w_k$ and a deterministic reward model $\tilde{q}_k$ (not estimated from the same sample).
    Let $O:=(X,X^{(k)},A,R)\sim P$ and assume $\mathbb{E}_P[\psi_k(O)^2]<\infty$, where
    \begin{equation*}
        \psi_k(O)
        :=
        w_k(X^{(k)},A)\{R-\tilde q_k(X,X^{(k)},A)\}
        +
        \sum_{a\in\mathcal{A}}\pi_\theta(a\mid X)\tilde q_k(X,X^{(k)},a).        
    \end{equation*}
    Then
    \begin{equation*}
        \mathbb{V}_P(\hat{V}_k(\pi_\theta))
        =
        \frac{1}{n}\mathbb{V}_P(\psi_k(O))
        .
    \end{equation*}
    Moreover, letting $\sigma_k^2(x,x^{(k)},a):=\mathbb{V}_P(R\mid X=x,X^{(k)}=x^{(k)},A=a)$ and $\Delta_k:=q_k-\tilde q_k$, the law of total variance yields
    \begin{equation*}
        \begin{split}
            \mathbb{V}_P(\psi_k(O))
            &=
            \mathbb{E}_P\Bigl[w_k(X^{(k)},A)^2\sigma_k^2(X,X^{(k)},A)\Bigr]
            \\
            &\quad
            + \mathbb{V}_P\Biggl(
                w_k(X^{(k)},A)\Delta_k(X,X^{(k)},A)
                +
                \sum_{a\in\mathcal{A}}\pi_\theta(a\mid X)\tilde q_k(X,X^{(k)},a)
            \Biggr).
        \end{split}
    \end{equation*}
\end{proposition}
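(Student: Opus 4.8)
The plan is to establish the two claims in order: the $1/n$ scaling follows immediately from the i.i.d.\ structure of the sample, and the total-variance decomposition follows by conditioning on the full current-plus-lagged context and action and applying the law of total variance.

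For the first claim, I would observe that under the oracle hypothesis the weights $w_k$ and the reward model $\tilde{q}_k$ are fixed (deterministic) functions not fitted from the same data, so $\hat{V}_k(\pi_\theta) = n^{-1}\sum_{i=1}^n \psi_k(O_i)$ is a genuine sample average of the i.i.d.\ summands $\psi_k(O_i)$ with $O_i = (X_i, X_i^{(k)}, A_i, R_i)\sim P$. The assumed second-moment bound $\mathbb{E}_P[\psi_k(O)^2]<\infty$ ensures $\mathbb{V}_P(\psi_k(O))<\infty$, and independence across $i$ eliminates all cross-covariances, giving $\mathbb{V}_P(\hat{V}_k(\pi_\theta)) = n^{-2}\sum_{i=1}^n \mathbb{V}_P(\psi_k(O_i)) = n^{-1}\mathbb{V}_P(\psi_k(O))$.

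For the decomposition, I would condition on the $\sigma$-field $\mathcal{H} := \sigma(X, X^{(k)}, A)$ (note: the \emph{full} context-action field, not $\mathcal{G}_k = \sigma(X^{(k)}, A)$) and apply the law of total variance,
\begin{equation*}
    \mathbb{V}_P(\psi_k(O)) = \mathbb{E}_P[\mathbb{V}_P(\psi_k(O)\mid \mathcal{H})] + \mathbb{V}_P(\mathbb{E}_P[\psi_k(O)\mid \mathcal{H}]).
\end{equation*}
The key observation is that every ingredient of $\psi_k(O)$ other than $R$ is $\mathcal{H}$-measurable: the weight $w_k(X^{(k)},A)$, the plug-in value $\tilde{q}_k(X, X^{(k)}, A)$, and the direct model term $\sum_{a\in\mathcal{A}}\pi_\theta(a\mid X)\tilde{q}_k(X,X^{(k)},a)$ all depend only on $(X, X^{(k)}, A)$. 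Consequently, conditional on $\mathcal{H}$ the only randomness left is in $R$, so $\mathbb{V}_P(\psi_k(O)\mid \mathcal{H}) = w_k(X^{(k)},A)^2\,\mathbb{V}_P(R\mid \mathcal{H}) = w_k(X^{(k)},A)^2\,\sigma_k^2(X,X^{(k)},A)$, and taking the expectation produces the first term of the stated identity.

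For the second term, I would compute the conditional mean using the definition $\mathbb{E}_P[R\mid \mathcal{H}] = q_k(X,X^{(k)},A)$, which gives $\mathbb{E}_P[\psi_k(O)\mid \mathcal{H}] = w_k(X^{(k)},A)\{q_k - \tilde{q}_k\} + \sum_{a\in\mathcal{A}}\pi_\theta(a\mid X)\tilde{q}_k(X,X^{(k)},a) = w_k(X^{(k)},A)\Delta_k(X,X^{(k)},A) + \sum_{a\in\mathcal{A}}\pi_\theta(a\mid X)\tilde{q}_k(X,X^{(k)},a)$, and its variance is exactly the second term. The computation is essentially routine; the only point requiring care is the correct choice of conditioning field and the verification that the direct model term is $\mathcal{H}$-measurable, so that it contributes nothing to the conditional variance and only enters the outer variance of the conditional mean.
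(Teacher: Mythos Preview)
Your proposal is correct and follows essentially the same route as the paper: the paper's proof also uses the i.i.d.\ structure for the $1/n$ scaling and then conditions on $(X,X^{(k)},A)$ to apply the law of total variance, computing the conditional variance and conditional mean exactly as you do. Your explicit remark that the conditioning field is $\sigma(X,X^{(k)},A)$ rather than $\mathcal{G}_k=\sigma(X^{(k)},A)$ is a helpful clarification, but the argument is otherwise identical.
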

See Appendix~\ref{app:proof-var-dolce-ope-oracle} for the proof.

\subsection{Inference for DOLCE (Cross-Fitting)}
\label{app:subsec:inference}
Throughout this subsection, $\longrightarrow_p$ and $\longrightarrow_d$ denote convergence in probability and convergence in distribution, respectively.
We also use the standard stochastic-order notation $o_p(\cdot)$ and $O_p(\cdot)$.

In practice, $q_k$, $\bar{\pi}_{\theta,k}$, and $\bar{\pi}_{0,k}$ are unknown and must be estimated from data.
We use cross-fitting to avoid strong empirical-process conditions.

\begin{assumption}[Regularity for inference]
\label{ass:regularity}
    $R$ has finite second moment and the (possibly clipped) weights satisfy $\sup_k\|w_k(X^{(k)},A)\|_{L_{\infty}(P)} < \infty$.
\end{assumption}

\begin{assumption}[Nuisance rates]
\label{ass:nuisance-rates}
    Let $\hat{q}_k$ and $\hat{w}_k$ be cross-fitted estimates of $q_k$ and $w_k$, respectively.
    For each fixed $k$,
    \begin{equation*}
        \|\hat{q}_k-q_k\|_{L_2(P)}
        \longrightarrow_p
        0
        ,
        \quad
        \|\hat{w}_k - w_k\|_{L_2(P)}
        \longrightarrow_p
        0
        .
    \end{equation*}
    Moreover, the product rate satisfies
    \begin{equation*}
        \|\hat{q}_k - q_k\|_{L_2(P)}\cdot\|\hat{w}_k - w_k\|_{L_2(P)}
        =
        o_p(n^{-1/2})
        .
    \end{equation*}
    In addition, the reward-model error is asymptotically residual-invariant in the sense that
    \begin{equation*}
        \left\|
            (q_k-\hat{q}_k)
            -
            \mathbb{E}_{P}[q_k-\hat{q}_k\mid X^{(k)},A]
        \right\|_{L_2(P)}
        =
        o_p(n^{-1/2})
        .
    \end{equation*}
\end{assumption}

\begin{theorem}[Consistency and asymptotic normality]
\label{thm:an-dolce-ope}
    Fix $k\in [K]$ and a target policy $\pi_{\theta}$.
    Under Assumptions~\ref{ass:ci1:consistency}--\ref{ass:ci2:unconfoundedness}, \ref{ass:current-action-sufficiency}, \ref{ass:common-lag-support}, \ref{ass:regularity}, and \ref{ass:nuisance-rates},
    the cross-fitted estimator $\hat{V}_k(\pi_\theta)$ is consistent:
    \begin{equation*}
        \hat{V}_k(\pi_{\theta}) \longrightarrow_p V(\pi_\theta)
        .
    \end{equation*}
    Moreover, letting $O:=(X,X^{(k)},A,R)$ and defining
    \begin{equation*}
        \begin{split}
            \phi_k(O)
            &:=
            w_k(X^{(k)},A)\{R-q_k(X,X^{(k)},A)\}
            \\
            &\quad
            +
            \sum_{a\in \mathcal{A}}\pi_{\theta}(a\mid X)q_k(X,X^{(k)},a)
            -
            V(\pi_\theta)
            ,
        \end{split}
    \end{equation*}
    we have the asymptotic normality
    \begin{equation*}
        \sqrt{n}\{\hat{V}_k(\pi_{\theta}) - V(\pi_{\theta})\}
        \longrightarrow_d
        N(0, \mathbb{V}_{P}(\phi_k(O)))
        .
    \end{equation*}
\end{theorem}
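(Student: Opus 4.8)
The plan is to follow the standard cross-fitting (double/debiased machine learning) template of \citet{chernozhukov2018double}, exploiting the fact that the per-observation map $\psi_k(O;w,q) := w(X^{(k)},A)\{R-q(X,X^{(k)},A)\} + \sum_{a}\pi_\theta(a\mid X)q(X,X^{(k)},a)$ satisfies $V(\pi_\theta)=\mathbb{E}_P[\psi_k(O;w_k,q_k)]$ (oracle unbiasedness with the true $q_k$, which trivially obeys Assumption~\ref{ass:ri} with $\delta_k\equiv 0$; cf.\ Theorem~\ref{thm:unbiased-dolce-ope-oracle}) and is Neyman-orthogonal in the nuisances. Writing the $K_{\mathrm{cf}}$-fold estimator as an average of out-of-fold contributions, I would decompose
\begin{equation*}
    \sqrt{n}\bigl\{\hat{V}_k(\pi_\theta)-V(\pi_\theta)\bigr\}
    =
    \frac{1}{\sqrt{n}}\sum_{i=1}^n \phi_k(O_i)
    +
    S_n + B_n,
\end{equation*}
where $\phi_k$ is the stated influence function, $S_n$ is an empirical-process (stochastic-equicontinuity) term collecting the centered fluctuations of $\psi_k(\cdot;\hat w_k^{(-j)},\hat q_k^{(-j)})-\psi_k(\cdot;w_k,q_k)$ around their conditional means, and $B_n$ is the conditional drift/bias term. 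The goal is to show $S_n=o_p(1)$ and $B_n=o_p(1)$, so that the leading average drives the limit.

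For $S_n$, I would condition on the training fold $l_{-j}$ used to build $(\hat w_k^{(-j)},\hat q_k^{(-j)})$. Given that fold, the out-of-fold summands are i.i.d.\ with a known conditional mean, so a conditional Chebyshev argument (the standard cross-fitting lemma) bounds each fold's centered contribution by its conditional $L_2(P)$ norm; this is $o_p(1)$ because $\|\hat w_k-w_k\|_{L_2(P)}\to_p 0$ and $\|\hat q_k-q_k\|_{L_2(P)}\to_p 0$ (Assumption~\ref{ass:nuisance-rates}), together with bounded weights (Assumption~\ref{ass:regularity}), imply $\|\psi_k(\cdot;\hat w_k,\hat q_k)-\psi_k(\cdot;w_k,q_k)\|_{L_2(P)}\to_p 0$. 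Cross-fitting is precisely what removes the need for a Donsker/entropy condition here.

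The crux is $B_n$, the conditional bias. Taking the expectation of $\psi_k(O;\hat w_k,\hat q_k)-\psi_k(O;w_k,q_k)$ over a fresh $O\sim P$ with the nuisances held fixed, and using $\mathbb{E}_P[R\mid X,X^{(k)},A]=q_k$, the $R$-terms collapse and --- after splitting $\hat w_k=w_k+(\hat w_k-w_k)$ --- the drift reduces to (i) a product term $\mathbb{E}_P[(\hat w_k-w_k)(q_k-\hat q_k)]$, which is $O_p(\|\hat w_k-w_k\|_{L_2(P)}\|\hat q_k-q_k\|_{L_2(P)})=o_p(n^{-1/2})$ by Cauchy--Schwarz and the product-rate condition, plus (ii) the oracle-weight drift, which is exactly the expression of Proposition~\ref{prop:bias-dolce-ope} with $\Delta_k$ replaced by the reward-model error $e_k:=q_k-\hat q_k$, namely $\mathbb{E}_P[\sum_a\{\pi_0(a\mid X)w_k(X^{(k)},a)-\pi_\theta(a\mid X)\}e_k(X,X^{(k)},a)]$. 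I would then split $e_k=\bar e_k+\tilde e_k$ with $\bar e_k:=\mathbb{E}_P[e_k\mid X^{(k)},A]$ its $\sigma(X^{(k)},A)$-measurable part. The measurable part $\bar e_k$ cancels exactly by the same computation that proves Theorem~\ref{thm:unbiased-dolce-ope-oracle}: conditioning on $X^{(k)}$ and using Assumption~\ref{ass:current-action-sufficiency} gives $\mathbb{E}_P[\pi_0(a\mid X)\mid X^{(k)}]=\bar\pi_{0,k}(a\mid X^{(k)})$, so $\mathbb{E}_P[\pi_0(a\mid X)w_k(X^{(k)},a)\mid X^{(k)}]=\bar\pi_{\theta,k}(a\mid X^{(k)})=\mathbb{E}_P[\pi_\theta(a\mid X)\mid X^{(k)}]$, and the bracket vanishes. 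The remaining orthogonal part $\tilde e_k$ has conditional mean zero given $(X^{(k)},A)$; bounding its contribution by Cauchy--Schwarz with bounded weights, and using finiteness of $\mathcal{A}$ to pass from the realized-action $L_2$ norm to the all-action sum, gives a bound of order $\|\tilde e_k\|_{L_2(P)}=o_p(n^{-1/2})$ by the asymptotic residual-invariance clause of Assumption~\ref{ass:nuisance-rates}. Hence $B_n=o_p(1)$.

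Finally, $\mathbb{E}_P[\phi_k(O)]=0$ and $\mathbb{V}_P(\phi_k(O))<\infty$ (finite second moment of $R$ and bounded weights, Assumption~\ref{ass:regularity}), so the Lindeberg--L\'evy CLT applied to $n^{-1/2}\sum_i\phi_k(O_i)$ yields $\sqrt{n}\{\hat V_k-V\}\to_d N(0,\mathbb{V}_P(\phi_k(O)))$, with consistency following a fortiori via Slutsky. The main obstacle is item (ii): verifying that the $\sigma(X^{(k)},A)$-measurable component of the reward-model error cancels --- where residual invariance and current-action sufficiency are essential --- and that the non-measurable component is genuinely $o_p(n^{-1/2})$ rather than merely $o_p(1)$, which is exactly what the extra residual-invariance rate in Assumption~\ref{ass:nuisance-rates} supplies.
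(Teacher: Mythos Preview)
Your proposal is correct and follows essentially the same approach as the paper's proof: the paper uses the identical three-term decomposition (an empirical-process remainder $R_{2n}$ handled by cross-fitting, a population drift $R_{1n}$ split into a product term $P[\Delta_w\Delta_q]$ controlled by the product-rate condition plus an oracle-weight term $B_n$, and the leading $(P_n-P)\phi_k$), and it controls $B_n$ by exactly your projection argument---decomposing $\Delta_q$ into its $\sigma(X^{(k)},A)$-measurable part (which cancels via Assumption~\ref{ass:current-action-sufficiency} and the $w_k\bar\pi_{0,k}=\bar\pi_{\theta,k}$ identity) and its orthogonal part (which is $o_p(n^{-1/2})$ by the residual-invariance clause of Assumption~\ref{ass:nuisance-rates}). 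Your write-up is in fact slightly more careful than the paper's in flagging the passage from the realized-action $L_2$ norm to the all-action sum via finiteness of $\mathcal{A}$.
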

See Appendix~\ref{app:proof-an-dolce-ope} for the proof of Theorem~\ref{thm:an-dolce-ope}.

We provide a standard cross-fitted semiparametric proof tailored to the DOLCE moment structure.

\subsection{Lag-Marginal Score for OPL}
\label{app:subsec:dolce-opl-details}

\begin{assumption}[Differentiation under conditional expectation]
\label{ass:diff-under-ce}
    For each $k\in[K]$ and $a\in \mathcal{A}$, the map $\theta\mapsto\pi_\theta(a\mid x)$ is a.s.\ differentiable and there exists an integrable envelope $M(X)$ such that $\sup_{\theta\in\Theta}\|\nabla_{\theta}\pi_{\theta}(a\mid X)\|\le M(X)$ a.s.
    This ensures
    \begin{equation*}
        \begin{split}
            \nabla_{\theta}\bar{\pi}_{\theta,k}(a\mid x^{(k)})
            &=
            \mathbb{E}_P\bigl[
                \nabla_{\theta}\pi_{\theta}(a\mid X)\mid X^{(k)}=x^{(k)}
            \bigr]
            ,
            \\
            &=
            \mathbb{E}_P\bigl[
                \pi_\theta(a\mid X)s_{\theta}(a\mid X)\mid X^{(k)} = x^{(k)}
            \bigr]
            .
        \end{split}
    \end{equation*}
\end{assumption}

Define
\begin{equation*}
    m_{\theta,k}(a\mid x^{(k)})
    :=
    \mathbb{E}_P[\pi_{\theta}(a\mid X)s_{\theta}(a\mid X)\mid X^{(k)}=x^{(k)}]
    .
\end{equation*}
Then $\bar{s}_{\theta,k}(a\mid x^{(k)}) = m_{\theta,k}(a\mid x^{(k)})/\bar{\pi}_{\theta,k}(a\mid x^{(k)})$ whenever $\bar{\pi}_{\theta,k}(a\mid x^{(k)}) > 0$.

\subsection{MTRI targets $\mathrm{ALC}_k$}
\label{app:subsec:mtri-alc}

\begin{proposition}[Interpretation: MTRI targets $\mathrm{ALC}_k$]
\label{prop:mtri-targets-alc}
    Assume $\mathcal{F}_k$ is rich enough to approximate $L_{2,0}(\mathcal{G}_k)$ and the centering step consistently approximates $\mathbb{E}_P[\cdot\mid \mathcal{G}_k]$.
    Then the critic term in \eqref{eq:mtri-pop} lower bounds $\mathrm{ALC}_k(q')$ and becomes equal to $\mathrm{ALC}_k(q')$ in the ideal limit $\tilde{\mathcal{F}}_k \to L_{2,0}(\mathcal{G}_k)$.
    Consequently, minimizing $\mathcal{L}_k(q')$ drives $\mathrm{ALC}_k(q')$ toward zero whenever the class $\mathcal{Q}_k$ contains a residual-invariant approximation (Assumption~\ref{ass:ri}).
\end{proposition}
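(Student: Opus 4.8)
The plan is to reduce the critic term to a Hilbert-space projection and read the inequality off from Cauchy--Schwarz. Write the working residual as $\Delta_k := q_k - q'$ (a function of $(X,X^{(k)},A)$) and fix any centered critic $\tilde f \in \tilde{\mathcal F}_k \subseteq L_{2,0}(\mathcal G_k)$. First I would condition on $(X,X^{(k)},A)$ and use $\mathbb{E}_P[R\mid X,X^{(k)},A]=q_k$ together with the $\sigma(X,X^{(k)},A)$-measurability of $q'$ and $\tilde f$ to obtain the moment identity
\begin{equation*}
    \mathbb{E}_P[(R-q')\,\tilde f] = \mathbb{E}_P[\Delta_k\,\tilde f].
\end{equation*}
Next, since $\mathbb{E}_P[\tilde f\mid\mathcal G_k]=0$, the $\mathcal G_k$-measurable part of $\Delta_k$ drops out by the tower property, so writing $\Delta_k^{\perp} := \Delta_k - \mathbb{E}_P[\Delta_k\mid\mathcal G_k]$ for the projection of $\Delta_k$ onto $L_{2,0}(\mathcal G_k)$ gives $\mathbb{E}_P[\Delta_k\,\tilde f]=\mathbb{E}_P[\Delta_k^{\perp}\,\tilde f]$. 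The last ingredient is the identity $\mathrm{ALC}_k(q') = \mathbb{E}_P[\mathbb{V}(\Delta_k\mid\mathcal G_k)] = \|\Delta_k^{\perp}\|_{L_2(P)}^2$: the ALC score is exactly the squared norm of this projection.

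With these in hand, Cauchy--Schwarz over the constrained critic class yields, for the supremum in \eqref{eq:mtri-pop},
\begin{equation*}
    \sup_{\tilde f\in\tilde{\mathcal F}_k,\,\|\tilde f\|_{L_2(P)}\le 1}\bigl(\mathbb{E}_P[\Delta_k^{\perp}\tilde f]\bigr)^2 \le \|\Delta_k^{\perp}\|_{L_2(P)}^2 = \mathrm{ALC}_k(q'),
\end{equation*}
which is the claimed lower-bound direction. For the ideal limit, note that the supremum over the \emph{entire} unit ball of $L_{2,0}(\mathcal G_k)$ is attained at the normalized optimizer $\tilde f^{\star}=\Delta_k^{\perp}/\|\Delta_k^{\perp}\|_{L_2(P)}$, which itself lies in $L_{2,0}(\mathcal G_k)$ and gives value $\|\Delta_k^{\perp}\|_{L_2(P)}^2$. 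Hence, once $\tilde{\mathcal F}_k$ is dense in $L_{2,0}(\mathcal G_k)$ I would approximate $\tilde f^{\star}$ to arbitrary $L_2(P)$-accuracy and pass to the limit to obtain equality. This is the one place where both hypotheses enter: ``$\mathcal F_k$ rich enough'' supplies the density, and ``the centering step consistently approximates $\mathbb{E}_P[\cdot\mid\mathcal G_k]$'' guarantees that the implemented $\tilde f$ is genuinely centered so the step $\mathbb{E}_P[\Delta_k\tilde f]=\mathbb{E}_P[\Delta_k^{\perp}\tilde f]$ is not spoiled by a centering bias.

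Finally, for the ``consequently'' clause I would reason at the population minimizer. Since the penalty vanishes at any residual-invariant $q^{\star}\in\mathcal Q_k$ (where $\mathrm{ALC}_k(q^{\star})=0$) and, by the pointwise bound above, never exceeds $\mathrm{ALC}_k(q')$ elsewhere, the penalty is a valid surrogate that biases the minimizer of $\mathcal L_k$ toward low-$\mathrm{ALC}_k$ models; the existence of a zero-$\mathrm{ALC}_k$ element in $\mathcal Q_k$ then caps the achievable penalized objective and, in the dense-critic limit where the penalty equals $\mathrm{ALC}_k$, forces the selected model's $\mathrm{ALC}_k$ toward zero. The hard part will be making this step fully rigorous rather than heuristic: the regression-fidelity term $\mathbb{E}_P[(R-q')^2]$ pulls $q'$ toward $q_k$ and can compete with the penalty, so one must quantify the trade-off and convert the \emph{pointwise} lower bound into a bound on the penalized \emph{minimizer}'s ALC, which in turn requires a uniform-over-$\mathcal Q_k$ approximation guarantee linking the finite-critic, cross-fitted surrogate to the true $\mathrm{ALC}_k$. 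This is exactly why I would keep the statement posed as an interpretation under idealized richness and centering-consistency hypotheses rather than as a finite-sample optimization guarantee.
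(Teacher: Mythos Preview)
Your proposal is correct and follows essentially the same route as the paper's proof: both reduce the critic moment to $\mathbb{E}_P[\Delta_k^{\perp}\tilde f]$ via the orthogonal decomposition $\Delta_k=\mathbb{E}_P[\Delta_k\mid\mathcal G_k]+\Delta_k^{\perp}$, identify $\mathrm{ALC}_k(q')=\|\Delta_k^{\perp}\|_{L_2(P)}^2$, and then read off the lower bound and the equality case from Cauchy--Schwarz/Hilbert-space duality with optimizer $\tilde f^\star=\Delta_k^{\perp}/\|\Delta_k^{\perp}\|_{L_2(P)}$. Your treatment of the ``consequently'' clause is in fact more explicit than the paper's, which leaves that part at the level of the statement.
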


See Appendix~\ref{app:proof-mtri-targets-alc} for the proof of Proposition~\ref{prop:mtri-targets-alc}.

\subsection{Cross-fitting and confidence intervals}
\label{app:subsec:cross-fitting-ci}
We use $K_{\mathrm{cf}}$-fold cross-fitting throughout.
For inference in OPE, define the cross-fitted influence-function estimate for lag $k$:
\begin{equation*}
    \begin{split}
        \hat{\phi}_{k,i}
        &:=
        \hat{w}_k^{(-\kappa(i))}(X_i^{(k)}, A_i)\{R_i-\hat{q}_k^{(-\kappa(i))}(X_i,X_i^{(k)}, A_i)\}
        \\
        &\qquad
        +
        \sum_{a\in\mathcal{A}}\pi_{\theta}(a\mid X_i)\hat{q}_k^{(-\kappa(i))}(X_i,X_i^{(k)},a) - \hat{V}_k(\pi_\theta)
        ,
    \end{split}
\end{equation*}
where $\kappa(i)$ denotes the fold index.
Then
\begin{equation*}
    \widehat{\mathrm{SE}}_k
    :=
    \sqrt{\frac{1}{n^2}\sum_{i=1}^n \hat{\phi}_{k,i}^2}
    ,
    \quad
    \mathrm{CI}_{1-\alpha}: \hat{V}_k(\pi_\theta)\pm z_{1-\alpha/2}\widehat{\mathrm{SE}}_k
    .
\end{equation*}
\section{Proofs}
\label{app:sec:proofs}

Let $(\Omega, \mathcal{F}, P)$ be an underlying probability space.
For the (single-lag) lagged contextual bandit analysis, define the random tuple $Z:=(X_0,X,A,R)$ taking values in $\mathcal{Z} := \mathcal{X}\times \mathcal{X}\times \mathcal{A}\times \mathbb{R}$, where $X_0$ is the lagged context and $X$ is the current context.
Let $P_0$ denote the logging distribution of $Z$ induced by
\begin{equation*}
    X_0 \sim p_0
    ,
    \;
    X\mid X_0 \sim p(\cdot \mid X_0)
    ,
    \;
    A\mid X\sim \pi_0(\cdot\mid X)
    ,
    \;
    R\mid (X_0,X,A)\sim p(\cdot\mid X_0,X,A)
    ,
\end{equation*}
and define the conditional mean reward $q_0(x,x_0,a) := \mathbb{E}_{P_0}[R\mid X=x,X_0=x_0,A=a]$.
Note that $P_0$ corresponds to the data-generating law $P$ in the main text (restricted to the single-lag setting);
we use $P_0$ here to distinguish the population law from the empirical measure $P_n$ introduced later.
All expectations/variances indexed by a measure are with respect to the corresponding law, e.g., $\mathbb{E}_{P_{0}}[\cdot]$ and $\mathbb{V}_{P_0}[\cdot]$.
For any policy $\pi(\cdot\mid x)$, define the target value
\begin{equation*}
    V(\pi)
    :=
    \mathbb{E}_{X_0\sim p_0}\mathbb{E}_{X\sim p(\cdot\mid X_0)}\bigl[
        \mathbb{E}_{A\sim \pi(\cdot\mid X)}[q_0(X,X_0,A)]
    \bigr]
    .
\end{equation*}
When integrals are needed (continuous actions/contexts), replace summations by integrals with respect to the appropriate dominating measures;
measurability/integrability are assumed whenever invoked.

\subsection{Proof of Lemma~\ref{lem:bias-support}}
\label{app:proof-bias-support}

\begin{proof}[Proof of Lemma~\ref{lem:bias-support}]
    We prove the OPE statements;
    the OPL (gradient) analogs follow by the same algebra after multiplying the integrands by the score.

    Let $(X,A,R)\sim P$ denote the contextual-bandit logging distribution with $A\mid X\sim \pi_0(\cdot\mid X)$, and let $q(x,a):=\mathbb{E}_P[R\mid X=x,A=a]$.
    Fix any target policy $\pi(\cdot\mid x)$ and define the unsupported action set
    \begin{equation*}
        \mathcal{U}(x;\pi,\pi_0)
        :=
        \{a\in\mathcal{A}:\ \pi(a\mid x)>0,\ \pi_0(a\mid x)=0\}
        .
    \end{equation*}

    \paragraph{(i) IPS.}
    Recall
    \begin{equation*}
        \widehat{V}_{\mathrm{IPS}}(\pi)
        :=
        \frac{1}{n}\sum_{i=1}^n \frac{\pi(A_i\mid X_i)}{\pi_0(A_i\mid X_i)}R_i
        .
    \end{equation*}
    Since $A\mid X\sim \pi_0(\cdot\mid X)$, we have
    \begin{equation*}
        \begin{split}
            \mathbb{E}_{P_0}\Bigl[
                \widehat{V}_{\mathrm{IPS}}(\pi)
            \Bigr]
            &=
            \mathbb{E}_{P_0}\Biggl[
                \frac{\pi(A\mid X)}{\pi_0(A\mid X)}R
            \Biggr]
            ,
            \\
            &=
            \mathbb{E}_{X}\Biggl[
                \sum_{a\in\mathcal{A}}\pi_0(a\mid X)\frac{\pi(a\mid X)}{\pi_0(a\mid X)}\mathbb{E}[R\mid X,a]
            \Biggr]
            ,
            \\
            &=
            \mathbb{E}_{X}\Biggl[
                \sum_{a:\pi_0(a\mid X)>0}\pi(a\mid X)q(X,a)
            \Biggr]
            .
        \end{split}
    \end{equation*}
    where the restriction $\pi_0(a\mid X)>0$ is harmless because $P(A=a\mid X)=0$ when $\pi_0(a\mid X)=0$.
    On the other hand $V(\pi) = \mathbb{E}_{X}[\sum_{a\in\mathcal{A}}\pi(a\mid X)q_0(X,a)]$.
    Subtracting yields
    \begin{equation*}
        \begin{split}
            \mathbb{E}_{P_0}\Bigl[\widehat{V}_{\mathrm{IPS}}(\pi)\Bigr] - V(\pi)
            &=
            \mathbb{E}_{X}\Biggl[
                \sum_{a:\pi_0(a\mid X)>0}\pi(a\mid X)q(X,a)
            \Biggr]
            -
            \mathbb{E}_{X}\Biggl[\sum_{a\in\mathcal{A}}\pi(a\mid X)q(X,a)\Biggr]
            ,
            \\
            &=
            -\mathbb{E}_{X}\Biggl[
                \sum_{a\in\mathcal{U}(X;\pi,\pi_0)}\pi(a\mid X)q(X,a)
            \Biggr]
            .
        \end{split}
    \end{equation*}

    \paragraph{(ii) DM and DR.}
    Let $\hat q$ be any (possibly misspecified) reward model and recall
    \begin{equation*}
        \begin{split}
            \widehat{V}_{\mathrm{DM}}(\pi)
            &:=
            \frac{1}{n}\sum_{i=1}^n \mathbb{E}_{A}[\hat{q}(X_i,A)]
            ,
            \\
            \widehat{V}_{\mathrm{DR}}(\pi)
            &:=
            \frac{1}{n}\sum_{i=1}^n
            \Biggl[
                \frac{\pi(A_i\mid X_i)}{\pi_0(A_i\mid X_i)}\{R_i-\hat{q}(X_i,A_i)\}
                +
                \mathbb{E}_{A}\bigl[\hat{q}(X_i,A)\bigr]
            \Biggr]
            .
        \end{split}
    \end{equation*}
    Then
    \begin{equation*}
        \begin{split}
            \mathbb{E}_{P_0}\Bigl[\widehat{V}_{\mathrm{DR}}(\pi)\Bigr]
            &=
            \mathbb{E}_{X}\Biggl[
                \sum_{a\in\mathcal{A}}\pi_0(a\mid X)\frac{\pi(a\mid X)}{\pi_0(a\mid X)}\{q(X,a) - \hat{q}(X,a)\}
                +
                \sum_{a\in\mathcal{A}}\pi(a\mid X)\hat{q}(X,a)
            \Biggr],
            \\
            &=
            \mathbb{E}_{X}\Biggl[
                \sum_{a: \pi_0(a\mid X) > 0}\pi(a\mid X)q(X,a)
                +
                \sum_{a\in \mathcal{U}(X;\pi,\pi_0)}\pi(a\mid X)\hat{q}(X,a)
            \Biggr]
            .
        \end{split}
    \end{equation*}
    Subtracting $V(\pi)=\mathbb{E}_X[\sum_{a\in\mathcal{A}}\pi(a\mid X)q(X,a)]$ yields
    \begin{equation*}
        \mathbb{E}_{P}\Bigl[
            \hat{V}_{\mathrm{DR}}(\pi_{\theta})
        \Bigr] - V(\pi_{\theta})
        =
        \mathbb{E}_{X}\Biggl[
            \sum_{a\in\mathcal{U}(X;\pi,\pi_0)}\pi(a\mid X)\bigl(
                \hat{q}(X,a)-q(X,a)
            \bigr)
        \Biggr]
        ,
    \end{equation*}
    as claimed.

    \paragraph{(iii), (iv) Gradient-based OPL analog.}
    Let $\pi_{\theta}$ be differentiable in $\theta$ and set $s_{\theta}(a\mid x) := \nabla_{\theta}\log \pi_{\theta}(a\mid x)$.
    Repeating the same derivation with the integrand multiplied by $s_{\theta}(a\mid X)$ gives the corresponding bias identities.
\end{proof}

\subsection{Proof of Lemma~\ref{lem:q-qk-relation}}
\label{app:proof-q-qk-relation}

\begin{proof}[Proof of Lemma~\ref{lem:q-qk-relation}]
    Fix $k$ and $a\in\mathcal{A}$.
    By the tower property,
    \begin{equation*}
        \begin{split}
            q(X,a)
            &=
            \mathbb{E}[R\mid X,A=a]
            ,
            \\
            &=
            \mathbb{E}\bigl[\mathbb{E}[R\mid X,X^{(k)},A=a]\mid X,A=a\bigr]
            ,
            \\
            &=
            \mathbb{E}\bigl[q_k(X,X^{(k)},a)\mid X,A=a\bigr].
        \end{split}
    \end{equation*}
    Under Assumption~\ref{ass:current-action-sufficiency}, $A\indep X^{(k)}\mid X$, hence the conditional law of $X^{(k)}$ given $(X,A=a)$ equals that given $X$.
    Therefore,
    \begin{equation*}
        \mathbb{E}\bigl[q_k(X,X^{(k)},a)\mid X,A=a\bigr]
        =
        \mathbb{E}\bigl[q_k(X,X^{(k)},a)\mid X\bigr],
    \end{equation*}
    proving the first claim.
    The second claim follows by multiplying both sides by $\pi_\theta(a\mid X)$, summing over $a$, and taking expectation.
\end{proof}

\subsection{Proof of Theorem~\ref{thm:an-dolce-ope}}
\label{app:proof-an-dolce-ope}

\begin{proof}[Proof of Theorem~\ref{thm:an-dolce-ope}]
    Fix $k\in[K]$ and a target policy $\pi_\theta$.
    Write $O:=(X,X^{(k)},A,R)\sim P$ and let $P_n$ denote the empirical measure based on $\{O_i\}_{i=1}^n$.

    Define, for any candidate nuisances $(q,w)$,
    \begin{equation*}
        \psi_k(O;q,w)
        :=
        w(X^{(k)},A)\{R-q(X,X^{(k)},A)\}
        +
        \sum_{a\in\mathcal{A}}\pi_\theta(a\mid X)q(X,X^{(k)},a)
        .
    \end{equation*}
    Then the (fold-wise) cross-fitted estimator can be written as the empirical mean of $\psi_k(O;\hat q_k,\hat w_k)$ (with fold-specific nuisances); we suppress the fold index to simplify notation.

    By construction, for the oracle nuisances $(q_k,w_k)$ we have
    \begin{equation*}
        \begin{split}
            \mathbb{E}_P\bigl[\psi_k(O;q_k,w_k)\bigr]
            &=
            \mathbb{E}_P\Bigl[w_k(X^{(k)},A)\{R-q_k(X,X^{(k)},A)\}\Bigr]
            \\
            &\qquad
            + \mathbb{E}_P\Biggl[\sum_{a}\pi_\theta(a\mid X)q_k(X,X^{(k)},a)\Biggr]
            \\
            &=
            V(\pi_\theta)
            ,
        \end{split}
    \end{equation*}
    since $\mathbb{E}_P[R-q_k(X,X^{(k)},A)\mid X,X^{(k)},A]=0$ and by Lemma~\ref{lem:q-qk-relation}.
    Therefore $\phi_k(O):=\psi_k(O;q_k,w_k)-V(\pi_\theta)$ is mean-zero and square-integrable under Assumption~\ref{ass:regularity}.

    % \paragraph{Step 1: Asymptotic linear expansion.}
    A standard decomposition gives
    \begin{equation*}
        \hat{V}_k(\pi_\theta)-V(\pi_\theta)
        =
        (P_n-P)\psi_k(\cdot;q_k,w_k)
        +
        R_{1n}
        +
        R_{2n}
        ,
    \end{equation*}
    where
    \begin{equation*}
        R_{1n}
        :=
        P\{\psi_k(\cdot;\hat{q}_k,\hat{w}_k)-\psi_k(\cdot;q_k,w_k)\}
        ,
        \quad
        R_{2n}
        :=
        (P_n-P)\{\psi_k(\cdot;\hat{q}_k,\hat{w}_k)-\psi_k(\cdot;q_k,w_k)\}
        .
    \end{equation*}
    Under cross-fitting and $L_2(P)$-consistency of $(\hat{q}_k,\hat w_k)$, we have $R_{2n}=o_p(n^{-1/2})$ (e.g., by conditional Cauchy--Schwarz within each fold, as in standard cross-fitting arguments).

    Next, expand $R_{1n}$.
    Let $\Delta_q(x,x^{(k)},a):=q_k(x,x^{(k)},a)-\hat{q}_k(x,x^{(k)},a)$ and $\Delta_w(x^{(k)},a):=\hat{w}_k(x^{(k)},a)-w_k(x^{(k)},a)$.
    A direct expansion yields
    \begin{equation*}
        R_{1n}
        =
        P[\Delta_w(X^{(k)},A)\Delta_q(X,X^{(k)},A)]
        +
        B_n
        ,
    \end{equation*}
    where
    \begin{equation*}
        B_n
        :=
        P\Biggl[
        w_k(X^{(k)},A)\Delta_q(X,X^{(k)},A)
        +
        \sum_{a\in\mathcal{A}}\pi_\theta(a\mid X)\{-\Delta_q(X,X^{(k)},a)\}
        \Biggr]
        .
    \end{equation*}
    The first term is controlled by Cauchy--Schwarz:
    \begin{equation*}
        \bigl|P[\Delta_w(X^{(k)},A)\Delta_q(X,X^{(k)},A)]\bigr|
        \le
        \|\hat{w}_k-w_k\|_{L_2(P)}\ \|\hat{q}_k-q_k\|_{L_2(P)}
        =
        o_p(n^{-1/2})
    \end{equation*}
    by Assumption~\ref{ass:nuisance-rates}.

    For $B_n$, note that $B_n$ is linear in the function $\Delta_q$.
    Moreover, if $\Delta_q(X,X^{(k)},A)$ is $\sigma(X^{(k)},A)$-measurable (i.e., Assumption~\ref{ass:ri} holds with $\tilde{q}_k=\hat{q}_k$), then the oracle bias cancellation argument implies $B_n=0$.
    In general, write $\mathcal{G}_k:=\sigma(X^{(k)},A)$ and decompose
    \begin{equation*}
        \Delta_q(X,X^{(k)},A)
        =
        \mathbb{E}_P[\Delta_q(X,X^{(k)},A)\mid \mathcal{G}_k]
        +
        \Bigl(
            \Delta_q(X,X^{(k)},A)-\mathbb{E}_P[\Delta_q(X,X^{(k)},A)\mid \mathcal{G}_k]
        \Bigr).
    \end{equation*}
    The first (projected) component is $\mathcal{G}_k$-measurable and hence does not contribute to $B_n$ by the same cancellation.
    Therefore, $B_n$ depends only on the orthogonal component and satisfies the bound
    \begin{equation*}
        |B_n|
        \le
        C
        \left\|
        \Delta_q(X,X^{(k)},A)-\mathbb{E}_P[\Delta_q(X,X^{(k)},A)\mid X^{(k)},A]
        \right\|_{L_2(P)}
        =
        o_p(n^{-1/2})
        ,
    \end{equation*}
    for a constant $C$ depending only on $\|w_k\|_{L_\infty(P)}$ and $|\mathcal{A}|$ (finite), using Cauchy--Schwarz.
    The last equality uses the final condition in Assumption~\ref{ass:nuisance-rates}.
    Hence $R_{1n}=o_p(n^{-1/2})$ and thus
    \begin{equation*}
        \hat{V}_k(\pi_\theta)-V(\pi_\theta)
        =
        (P_n-P)\phi_k
        +
        o_p(n^{-1/2})
        .
    \end{equation*}

    By the classical Lindeberg--Feller CLT and $\mathbb{E}_P[\phi_k(O)^2]<\infty$,
    \begin{equation*}
        \sqrt{n}(P_n-P)\phi_k
        \longrightarrow_d
        N(0,\mathbb{V}_P(\phi_k(O)))
        .        
    \end{equation*}
    Combining with the expansion above yields the stated asymptotic normality.
    Consistency follows by the same expansion without $\sqrt{n}$-scaling.
\end{proof}

\subsection{Proof of Proposition~\ref{prop:bias-dolce-ope}}
\label{app:proof-bias-dolce-ope}

\begin{proof}[Proof of Proposition~\ref{prop:bias-dolce-ope}]
    Recall the (single-lag) DOLCE estimator (with a fixed lag) takes the form
    \begin{equation*}
    % \label{eq:app-dolce-def}
        \widehat{V}_{\mathrm{DOLCE}}(\pi)
        :=
        \frac{1}{n}\sum_{i=1}^{n}\Bigl[
            w(X_{0i},A_i)\{R_i - \hat{q}(X_i,X_{0i},A_i)\}
            +
            \mu_{\hat{q}}(X_i,X_{0i})
        \Bigr]
        ,
    \end{equation*}
    where
    \begin{equation*}
        \mu_{\hat{q}}(x,x_0)
        :=
        \mathbb{E}_{A\sim \pi(\cdot\mid x)}[\hat{q}(x,x_0,A)]
        ,
    \end{equation*}
    where
    \begin{equation*}
        \bar\pi(a\mid x_0)
        :=
        \mathbb{E}[\pi(a\mid X)\mid X_0=x_0],
        \quad
        \bar\pi_0(a\mid x_0)
        :=
        P(A=a\mid X_0=x_0),
    \end{equation*}
    and
    \begin{equation*}
        w(x_0,a)
        :=
        \frac{\bar\pi(a\mid x_0)}{\bar\pi_0(a\mid x_0)}
        .
    \end{equation*}
    When $\bar{\pi}_0(a\mid x_0) = 0$, interpret $w(x_0,a) = 0$.

    Let $\Delta_{\hat{q}}(x,x_0,a) := q_0(x,x_0,a) - \hat{q}(x,x_0,a)$.
    Using iterated expectation under $P_0$ and $q_0 = \mathbb{E}_{P_0}[R\mid X,X_0,A]$,
    \begin{equation*}
    % \label{eq:app-bias-expand1}
        \begin{split}
            \mathbb{E}_{P_0}[\widehat{V}_{\mathrm{DOLCE}}(\pi)]
            &=
            \mathbb{E}_{P_0}[w(X_0,A)\Delta_{\hat{q}}(X,X_0,A) + \mu_{\hat{q}}(X,X_0)]
            ,
            \\
            &=
            \mathbb{E}_{X_0,X}\Bigl[
                \sum_{a\in\mathcal{A}}\pi_0(a\mid X)w(X_0,a)\Delta_{\hat{q}}(X,X_0,a)
                +
                \sum_{a\in\mathcal{A}}\pi(a\mid X)\hat{q}(X,X_0,a)
            \Bigr]
            .
        \end{split}
    \end{equation*}
    Subtracting $V(\pi) = \mathbb{E}_{X_0,X}[\sum_{a\in\mathcal{A}}\pi(a\mid X)q_0(X,X_0,a)]$ and noting $q_0=\hat{q} + \Delta_{\hat{q}}$ yields the exact bias identity
    \begin{equation}
    \label{eq:app-bias-exact}
        \begin{split}
            \operatorname{Bias}[\widehat{V}_{\mathrm{DOLCE}}(\pi)]
            &=
            \mathbb{E}_{X_0,X}\Bigl[
                \sum_{a\in\mathcal{A}}\Delta_{\hat{q}}(X,X_0,a)\Bigl\{
                    \pi_0(a\mid X)w(X_0,a)-\pi(a\mid X)
                \Bigr\}
            \Bigr]
            .
        \end{split}
    \end{equation}
    This proves the general bias expression claimed in Proposition~\ref{prop:bias-dolce-ope}.

    To obtain the support-violation component explicitly, define $\mathcal{U}_{\mathrm{lag}}(x_0;\pi,\pi_0) := \{a: \bar{\pi}(a\mid x_0)>0, \bar{\pi}_0(a\mid x_0) = 0\}$.
    Because $w(x_0,a) = 0$ on $\{\bar{\pi}_0(a\mid x_0) = 0\}$, \eqref{eq:app-bias-exact} can be rewritten as
    \begin{equation*}
        \begin{split}
            \operatorname{Bias}[\widehat{V}_{\mathrm{DOLCE}}(\pi)]
            &=
            -\mathbb{E}_{X_0,X}\Bigl[
                \sum_{a\in\mathcal{U}_{\mathrm{lag}}(X_0;\pi,\pi_0)}\pi(a\mid X)\Delta_{\hat{q}}(X,X_0,a)
            \Bigr]
            \\
            &\quad
            +
            \mathbb{E}_{X_0,X}\Bigl[
                \sum_{a:\bar{\pi}_0(a\mid X_0)> 0}\Delta_{\hat{q}}(X,X_0,a)\Bigl\{
                    \pi_0(a\mid X)w(X_0,a)-\pi(a\mid X)
                \Bigr\}
            \Bigr]
            ,
        \end{split}
    \end{equation*}
    which isolates the contribution of unsupported lag-actions.

    Finally, under Assumption~\ref{ass:current-action-sufficiency} and \ref{ass:ri} $\Delta_{\hat{q}}(X,X_0, a) = \delta(X_0,a)$ a.s. for some measurable $\delta$, and Assumption~\ref{ass:common-lag-support}, we have $w(X_0,a) = \pi(a\mid X_0)/\pi_0(a\mid X_0)$ well-defined on the support of $\pi(\cdot\mid X_0)$.
    Then, conditioning on $X_0$ and pulling out $\delta(X_0,a)$,
    \begin{equation*}
        \mathbb{E}[\pi_0(a\mid X)w(X_0,a)-\pi(a\mid X)\mid X_0]
        =
        w(X_0,a)\mathbb{E}[\pi_0(a\mid X)\mid X_0] - \mathbb{E}[\pi(a\mid X)\mid X_0]
        .
    \end{equation*}
    By Assumption~\ref{ass:current-action-sufficiency}, $P(A=a\mid X,X_0)= P(A=a\mid X) = \pi_0(a\mid X)$.
    Therefore, by iterated expectations,
    \begin{equation*}
        \begin{split}
            \mathbb{E}[\pi_0(a\mid X)\mid X_0]
            &=
            \mathbb{E}[P(A=a\mid X,X_0)\mid X_0]
            ,
            \\
            &=
            P(A=a\mid X_0)
            ,
            \\
            &=
            \bar{\pi}_0(a\mid X_0)
            .
        \end{split}
    \end{equation*}
    By definition, $\mathbb{E}[\pi(a\mid X)\mid X_0] = \bar{\pi}(a\mid X_0)$.
    Hence $w(X_0,a)\mathbb{E}[\pi_0(a\mid X)\mid X_0] - \mathbb{E}[\pi(a\mid X)\mid X_0] = w(X_0, a)\bar{\pi}_0(a\mid X_0) - \bar{\pi}(a\mid X_0) = 0$.
    Plugging into \eqref{eq:app-bias-exact} shows the bias vanishes.
\end{proof}

\subsection{Proof of Theorem~\ref{thm:unbiased-dolce-ope-oracle}}
\label{app:proof-unbiased-dolce-ope-oracle}

\begin{proof}[Proof of Theorem~\ref{thm:unbiased-dolce-ope-oracle}]
    Under the oracle setting of Theorem~\ref{thm:unbiased-dolce-ope-oracle}, the estimator uses the true lag-marginal policies $\bar{\pi}(\cdot\mid X_0)$ and $\bar{\pi}_0(\cdot\mid X_0)$ and a possibly misspecified $\hat{q}$ that satisfies Assumption~\ref{ass:ri}, i.e., $q_0(X,X_0,a)-\hat{q}(X,X_0,a)=\delta(X_0,a)$ almost surely for some measurable $\delta$.

    By the exact bias identity \eqref{eq:app-bias-exact} in Proposition~\ref{prop:bias-dolce-ope},
    \begin{equation*}
        \operatorname{Bias}[\widehat{V}_{\mathrm{DOLCE}}(\pi)]
        =
        \mathbb{E}_{X_0,X}\Bigl[
            \sum_{a\in\mathcal{A}}\delta(X_0,a)\Bigl\{
                \pi_0(a\mid X)w(X_0,a)
                -
                \pi(a\mid X)
            \Bigr\}
        \Bigr]
        .
    \end{equation*}
    Under Assumption~\ref{ass:common-lag-support}, $w(X_0, a) = \pi(a\mid X_0)/\pi_0(a\mid X_0)$ is well-defined on the support of $\pi(\cdot\mid X_0)$.
    Condition on $X_0$ and apply conditional independence $A\indep X_0\mid X$ as in the previous proof to obtain
    \begin{equation*}
        \mathbb{E}[\pi_0(a\mid X)w(X_0,a)-\pi(a\mid X)\mid X_0]
        =
        w(X_0,a)\bar{\pi}_0(a\mid X_0)-\bar{\pi}(a\mid X_0)
        =
        0
        .
    \end{equation*}
    Therefore the bias is zero, i.e., $\mathbb{E}_{P_0}[\widehat{V}_{\mathrm{DOLCE}}(\pi)] = V(\pi)$.
\end{proof}

\subsection{Proof of Proposition~\ref{prop:var-dolce-ope-oracle}}
\label{app:proof-var-dolce-ope-oracle}

\begin{proof}[Proof of Proposition~\ref{prop:var-dolce-ope-oracle}]
    Since $\hat V_k(\pi_\theta)=n^{-1}\sum_{i=1}^n \psi_k(O_i)$ with i.i.d.\ $O_i\sim P$, we have $\mathbb{V}_P(\hat V_k(\pi_\theta))=n^{-1}\mathbb{V}_P(\psi_k(O))$.
    Conditioning on $(X,X^{(k)},A)$ gives
    \begin{equation*}
        \mathbb{V}_P(\psi_k(O)\mid X,X^{(k)},A)
        =
        w_k(X^{(k)},A)^2\sigma_k^2(X,X^{(k)},A)
    \end{equation*}
    and
    \begin{equation*}
        \mathbb{E}_P[\psi_k(O)\mid X,X^{(k)},A]
        =
        w_k(X^{(k)},A)\Delta_k(X,X^{(k)},A)+\sum_a\pi_\theta(a\mid X)\tilde q_k(X,X^{(k)},a)
        .
    \end{equation*}
    The stated decomposition follows from the law of total variance.
\end{proof}

\subsection{Proof of Proposition~\ref{prop:ri-moment-equiv}}
\label{app:proof-ri-moment-equiv}

\begin{proof}[Proof of Proposition~\ref{prop:ri-moment-equiv}]
    Let $\mathcal{G}_k:=\sigma(X^{(k)},A)$ and view $L_2(P)$ as a Hilbert space with inner product $\langle f,g\rangle:=\mathbb{E}_P[fg]$.
    Let $L_2(\mathcal{G}_k)\subset L_2(P)$ denote the closed subspace of $\mathcal{G}_k$-measurable functions and let
    \begin{equation*}
        L_{2,0}(\mathcal{G}_k)
        :=
        \{f\in L_2(P): \mathbb{E}_P[f\mid \mathcal{G}_k]=0\}        
    \end{equation*}
    be its orthogonal complement.
    
    \paragraph{($\Rightarrow$).}
    If $\Delta$ is $\mathcal{G}_k$-measurable, then for any $f\in L_{2,0}(\mathcal{G}_k)$,
    \begin{equation*}
        \mathbb{E}_P[\Delta f]
        =
        \mathbb{E}_P\Bigl[
            \mathbb{E}_P\bigl[
                \Delta f\mid \mathcal{G}_k
            \bigr]
        \Bigr]
        =
        \mathbb{E}_P\Bigl[
            \Delta\mathbb{E}_P\bigl[
                f\mid \mathcal{G}_k
            \bigr]
        \Bigr]
        =
        0
        .
    \end{equation*}
    
    \paragraph{($\Leftarrow$).}
    Conversely, suppose $\mathbb{E}_P[\Delta f]=0$ for all $f\in L_{2,0}(\mathcal{G}_k)$.
    Take $f^\star:=\Delta-\mathbb{E}_P[\Delta\mid \mathcal{G}_k]$.
    By construction, $f^\star\in L_{2,0}(\mathcal{G}_k)$, hence
    \begin{equation*}
        0
        =
        \mathbb{E}_P[\Delta f^\star]
        =
        \mathbb{E}_P\Bigl[
            \bigl(
                \mathbb{E}_P[\Delta\mid \mathcal{G}_k]+f^\star
            \bigr)f^\star
        \Bigr]
        =
        \mathbb{E}_P\bigl[
            (f^\star)^2
        \bigr]
        ,        
    \end{equation*}
    where we used $\mathbb{E}_P[\mathbb{E}_P[\Delta\mid \mathcal{G}_k]\cdot f^\star]=\mathbb{E}_P[\mathbb{E}_P[\Delta\mid \mathcal{G}_k]\mathbb{E}_P[f^\star\mid \mathcal{G}_k]]=0$.
    Therefore $f^\star=0$ in $L_2(P)$, i.e., $\Delta=\mathbb{E}_P[\Delta\mid \mathcal{G}_k]$ $P$-a.s., which means $\Delta$ is $\mathcal{G}_k$-measurable.
    
    This completes the proof.
\end{proof}

\subsection{Proof of Proposition~\ref{prop:mtri-targets-alc}}
\label{app:proof-mtri-targets-alc}

\begin{proof}[Proof of Proposition~\ref{prop:mtri-targets-alc}]
    Fix $k$ and write $\mathcal{G}_k:=\sigma(X^{(k)},A)$.
    For a candidate $q'\in\mathcal{Q}_k$, define the residual
    \begin{equation*}
        \Delta:=q_k(X,X^{(k)},A)-q'(X,X^{(k)},A)\in L_2(P)
        .
    \end{equation*}
    Let $\Pi_{\mathcal{G}_k}\Delta:=\mathbb{E}_P[\Delta\mid \mathcal{G}_k]$ be the $L_2(P)$-projection onto $L_2(\mathcal{G}_k)$ and set $\Delta_\perp:=\Delta-\Pi_{\mathcal{G}_k}\Delta$.
    Then
    \begin{equation*}
        \mathbb{E}_P[\Delta_\perp\mid \mathcal{G}_k]=0,
        \quad
        \|\Delta_\perp\|_{L_2(P)}^2
        =
        \mathbb{E}_P\!\left[\mathbb{V}(\Delta\mid \mathcal{G}_k)\right]
        =
        \mathrm{ALC}_k(q')
        .
    \end{equation*}
    
    Now consider the centered function class
    \begin{equation*}
        \mathcal{F}_0
        :=
        L_{2,0}(\mathcal{G}_k)=\{f\in L_2(P):\mathbb{E}_P[f\mid \mathcal{G}_k]=0\}
        .        
    \end{equation*}
    For any $f\in\mathcal{F}_0$, using $\mathbb{E}_P[R\mid X,X^{(k)},A]=q_k(X,X^{(k)},A)$, we have
    \begin{equation*}
        \mathbb{E}_P[(R-q'(X,X^{(k)},A))f]
        =
        \mathbb{E}_P[\Delta f]
        =
        \mathbb{E}_P[\Delta_\perp f],
    \end{equation*}
    since $\Pi_{\mathcal{G}_k}\Delta\in L_2(\mathcal{G}_k)$ is orthogonal to $\mathcal{F}_0$.
    Therefore, by Hilbert space duality,
    \begin{equation*}
        \sup_{f\in\mathcal{F}_0:\ \|f\|_{L_2(P)}\le 1}
        \Bigl|\mathbb{E}_P[(R-q'(X,X^{(k)},A))f]\Bigr|
        =
        \|\Delta_\perp\|_{L_2(P)}
        =
        \mathrm{ALC}_k(q')^{1/2}.        
    \end{equation*}
    Squaring yields that the ideal critic term (supremum over $\mathcal{F}_0$) equals $\mathrm{ALC}_k(q')$.
    
    Finally, for a restricted critic class $\tilde{\mathcal{F}}_k$ (and approximate centering), the supremum over $\tilde{\mathcal{F}}_k$ lower bounds the supremum over $\mathcal{F}_0$ up to approximation and centering errors; in the ideal limit $\tilde{\mathcal{F}}_k\to \mathcal{F}_0$ and consistent centering, the lower bound becomes tight.
    This completes the proof.
\end{proof}
\section{Detailed Experiments Settings and Additional Results}
\label{app:sec:detailed-exp}

\subsection{Synthetic Data Experiments}
\label{app:subsec:syn-exp}

\subsubsection{Detailed Data Generating Process}
\label{app:subsubsec:syn-dgp}
This section provides the full synthetic generator used in Section~\ref{subsec:syn-experiments}.
We generate i.i.d.\ samples $(X^{(1)},X,A,R)$ with $X^{(1)},X\in\mathbb{R}^d$ and $A\in\{0,1,\ldots,|\mathcal{A}|-1\}$.

\paragraph{Contexts.}
Let $d$ be the feature dimension.
We draw the lag context as $X^{(1)}\sim N(0,I_d)$ and then draw the current context as
\begin{equation*}
    X \sim N(\rho X^{(1)}, 3^2 I_d),
\end{equation*}
where $\rho\ge 0$ controls lag--current dependence.
To ensure that overlap violations can depend on $X$ without inducing lag-overlap failures, we overwrite the first coordinate $X_1\sim N(0,3^2)$ independently of $X^{(1)}$.

\paragraph{Mean reward.}
The conditional mean reward is
\begin{equation*}
    q(X,X^{(1)},a)
    =
    \lambda g(X,a)
    +
    (1-\lambda)h(X^{(1)},a)
    +
    \eta u(X,X^{(1)},a)
    .
\end{equation*}
The functions $g$ and $h$ are constructed from threshold rules on each feature coordinate.
Concretely, for each feature index $j\in\{0,\ldots,d-2\}$,
\begin{itemize}
\item for the baseline action $a=0$, we add a fixed contrast:
$g(X,0)\text{ and }h(X^{(1)},0)$
receive $-0.2$ if the corresponding feature exceeds $0.5$ and $+0.2$ otherwise;
\item for each $a\ge 1$, we draw environment-specific coefficients (fixed across Monte Carlo replications) and add
a piecewise-constant contribution depending on whether the feature exceeds $0.5$.
\end{itemize}
We also include a non-linear count effect:
let $C(X)=\sum_{j=1}^{d-2}\boldsymbol{1}\{X_{j+1}>0.5\}$ and define an additional term that penalizes action $0$ and rewards $a\ge 1$ when $C(X)\ge 2$;
the same construction applies to $X^{(1)}$ in $h$.
This yields a reward surface that is non-linear but structured.

The interaction term $u$ is used to intentionally violate residual invariance:
it depends on products (and a sinusoidal term) of current and lag features and is action-dependent via random coefficients.
When $\eta=0$, rewards are additive in $(X,X^{(1)})$ and align with the residual-invariance-friendly structure targeted by our estimator;
increasing $\eta$ strengthens non-additive effects.

Finally, observed rewards include additive Gaussian noise:
\begin{equation*}
    R = q(X,X^{(1)},A) + \varepsilon
    ,
    \quad
    \varepsilon\sim N(0,1).
\end{equation*}

\paragraph{Logging policy and support violation ratio.}
The logging policy is a softmax over current-context scores:
\begin{equation*}
    \pi_0(a\mid X)
    \propto
    \exp\{\beta g(X,a)\}
    ,
\end{equation*}
with an optional exploration floor (used in OPL) implemented as a convex mixture with the uniform policy.
To impose current-context support violations, we choose a threshold $c_r$ as the $(1-r)$-quantile of $\{X_{i1}\}_{i=1}^n$ and enforce
\begin{equation*}
    X_1>c_r
    \Rightarrow
    \pi_0(0\mid X)=1,\ \pi_0(a\mid X)=0\ (a\ne 0)
    ,
\end{equation*}
and set $A=0$ deterministically on this region.
We refer to $r\in[0,1]$ as the support violation ratio.

\paragraph{Target policy for OPE.}
In the OPE experiments, the target policy is an $\varepsilon$-greedy policy using \emph{only} the current-context component $g(X,\cdot)$,
i.e., it chooses the empirically best action with probability $1-\varepsilon$ and explores uniformly with probability $\varepsilon$ (default $\varepsilon=0.1$).
The true value $V(\pi_\theta)$ is approximated via a large Monte Carlo sample (independent of $r$ because $r$ affects only the logging policy).

\subsubsection{Evaluation Metrics}
\label{app:subsubsec:syn-metrics}

\paragraph{OPE metrics.}
For each configuration, we generate $B=100$ logged datasets and compute, for each estimator,
\begin{equation*}
    \begin{split}
        \widehat{\mathrm{Bias}}
        &:=
        \frac{1}{B}\sum_{b=1}^B \left(\widehat{V}^{(b)}-V\right)
        ,
        \\
        \widehat{\mathrm{Var}}
        &:=
        \frac{1}{B}\sum_{b=1}^B \left(\widehat{V}^{(b)}-\overline{V}\right)^2
        ,
        \quad
        \overline{V}:=\frac{1}{B}\sum_{b=1}^B \widehat{V}^{(b)}
        ,
        \\
        \widehat{\mathrm{MSE}}
        &:=
        \frac{1}{B}\sum_{b=1}^B \left(\widehat{V}^{(b)}-V\right)^2
        .
    \end{split}
\end{equation*}
To compute coverage, we form a nominal $95\%$ CI for each run using influence-function standard errors.
When an estimator admits a cross-fitted influence-function representation $\widehat{V}=n^{-1}\sum_{i=1}^n \hat{\phi}_i + o_p(n^{-1/2})$, we estimate the asymptotic variance by
\begin{equation*}
    \widehat{\sigma}^2 := \frac{1}{n}\sum_{i=1}^n \hat{\phi}_i^2
    ,
    \quad
    \widehat{\mathrm{SE}} := \frac{\widehat{\sigma}}{\sqrt{n}}
    ,
\end{equation*}
and construct $\widehat{V}\pm z_{0.975}\widehat{\mathrm{SE}}$.
Coverage is the fraction of runs where the CI contains $V$.

\paragraph{OPL metrics.}
We generate an independent test set of size $n_{\mathrm{test}}$ (default $10,000$) and evaluate true policy values by
\begin{equation*}
    \widehat{V}_{\mathrm{test}}(\pi)
    :=
    \frac{1}{n_{\mathrm{test}}}\sum_{j=1}^{n_{\mathrm{test}}}
    \sum_{a\in\mathcal{A}} \pi(a\mid X_j)\, q(X_j,X_j^{(1)},a)
    ,
\end{equation*}
where $q$ is known from the generator.
We compute $V^\star$ on the same test set as $\widehat{V}^\star_{\mathrm{test}} := n_{\mathrm{test}}^{-1}\sum_j \max_a q(X_j,X_j^{(1)},a)$ and measure NI, OSI, and regret as in Section~\ref{subsec:syn-experiments}.

\subsubsection{Results and Additional Sensitivity Analyses}
\label{app:subsubsec:syn-results}

\paragraph{Sensitivity for OPE.}
\label{app:subsubsec:syn-ope-sensitivity}
We study robustness of OPE estimators to key data-generating and statistical factors while fixing the remaining parameters to the default setting (unless otherwise stated).
In particular, we vary
(i) the mixture parameter $\lambda$ controlling the relative strength of current- versus lag-driven reward components,
(ii) the number of actions $|\mathcal{A}|$,
(iii) the logged sample size $n$, and
(iv) the interaction strength $\eta$ that controls violations of residual invariance.
Across all sweeps, we report MSE, bias, variance, and empirical coverage of nominal $95\%$ confidence intervals.

Figure~\ref{fig:app:ope-lambda} shows that DOLCE exhibits a slight deterioration in MSE as $\lambda$ increases.
The bias and variance plots indicate that this change is primarily driven by the emergence of a small bias at larger $\lambda$, which can make DOLCE marginally worse than DM/DR in MSE in this regime, while DOLCE still maintains high (often near-nominal) CI coverage.
This trend is consistent with our theory:
DOLCE's oracle unbiasedness hinges on (approximate) residual invariance, and as the reward becomes more current-context-driven (larger $\lambda$), any residual-invariance defect in the learned reward model can translate into non-negligible bias through the bias identity in Proposition~\ref{prop:bias-dolce-ope}.
We also observe a noticeable regime change around $\lambda=0.1$, which is plausibly explained by the DGP:
overlap violations are induced via the current feature $X_1$, so when $\lambda$ is very small the target value is dominated by lag-driven structure and is less sensitive to the unsupported current-context region, whereas once $\lambda$ becomes non-negligible the current-driven component couples the reward to the violation region more strongly.

\begin{figure*}[tb]
\vskip 0.2in
\begin{center}
\centerline{\includegraphics[width=\columnwidth]{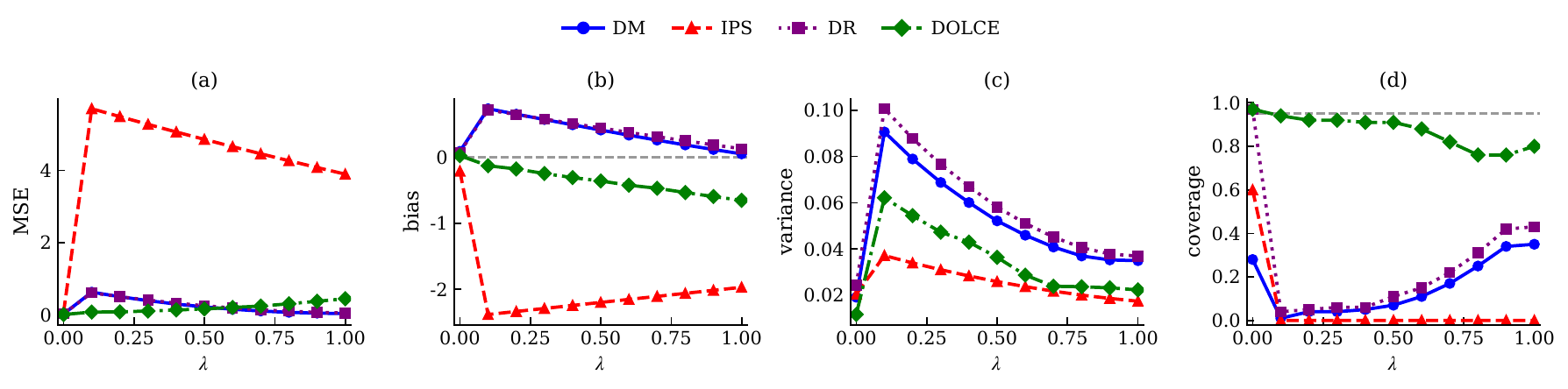}}
\caption{
Off-policy evaluation under support violation: sensitivity to the mixture parameter $\lambda$.
(a) Mean squared error (MSE) of the estimated policy value $\hat{V}(\pi)$ as a function of $\lambda$.
(b) Bias of $\hat{V}(\pi)$ versus $\lambda$, where the horizontal gray dashed line corresponds to zero bias.
(c) Variance of $\hat{V}(\pi)$ versus $\lambda$.
(d) Empirical coverage of nominal $95\%$ confidence intervals for $V(\pi)$ as a function of $\lambda$, where the horizontal gray dashed line denotes the nominal $95\%$ coverage level.
}
\label{fig:app:ope-lambda}
\end{center}
\vskip -0.2in
\end{figure*}

Figure~\ref{fig:app:ope-action} indicates that DOLCE remains stable across different action-set sizes, whereas DM degrades as $|\mathcal{A}|$ increases, suggesting increased model misspecification impact in larger action spaces.
The variance increases with $|\mathcal{A}|$ (Figure~\ref{fig:app:ope-action} (c)), which matches the usual intuition that propensities become smaller on average and importance-weighted components become noisier as the action space grows.
In addition, while DM/IPS/DR exhibit low or unstable coverage, DOLCE consistently maintains coverage close to the nominal level across $|\mathcal{A}|$ (Figure~\ref{fig:app:ope-action} (d)), supporting the practical value of the lag-based correction for inference.

\begin{figure*}[tb]
\vskip 0.2in
\begin{center}
\centerline{\includegraphics[width=\columnwidth]{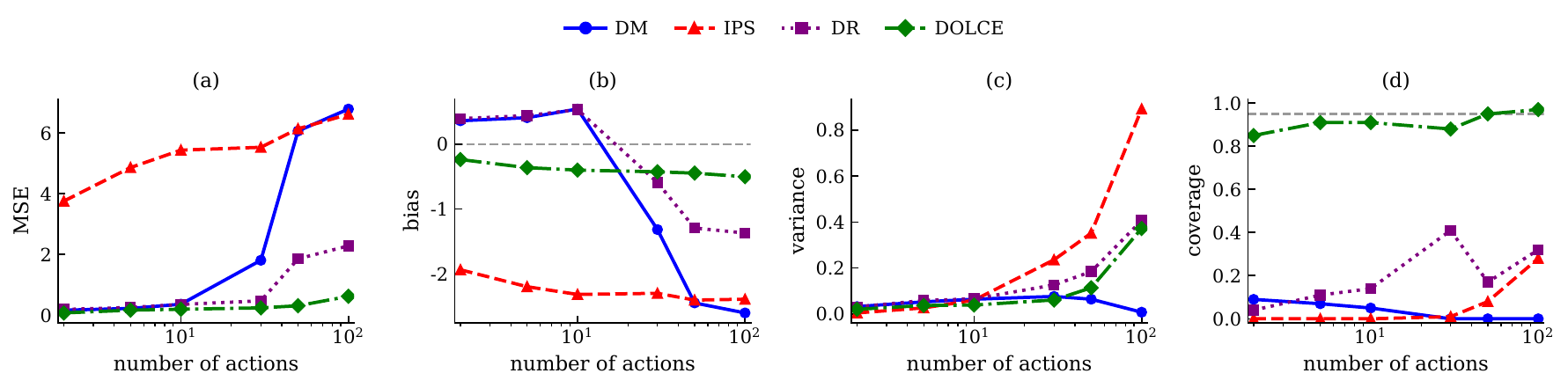}}
\caption{
Off-policy evaluation under support violation: sensitivity to the number of actions $|\mathcal{A}|$.
(a) Mean squared error (MSE) of the estimated policy value $\hat{V}(\pi)$ as a function of $|\mathcal{A}|$.
(b) Bias of $\hat{V}(\pi)$ versus $|\mathcal{A}|$, where the horizontal gray dashed line corresponds to zero bias.
(c) Variance of $\hat{V}(\pi)$ versus $|\mathcal{A}|$.
(d) Empirical coverage of nominal $95\%$ confidence intervals for $V(\pi)$ as a function of $|\mathcal{A}|$, where the horizontal gray dashed line denotes the nominal $95\%$ coverage level.
}
\label{fig:app:ope-action}
\end{center}
\vskip -0.2in
\end{figure*}

Figure~\ref{fig:app:ope-data} shows that DM, DR, and DOLCE are broadly stable across the examined data sizes, with a mild tendency for bias to move closer to zero as $n$ increases.
The variance decreases with $n$ (Figure~\ref{fig:app:ope-data} (c)), as expected from standard $1/n$ scaling and improved nuisance estimation with more data, and the coverage pattern in Figure~\ref{fig:app:ope-data} (d) suggests that this variance reduction can materially affect CI calibration in finite samples.
This behavior is in line with the asymptotic normality result (Theorem~\ref{thm:an-dolce-ope}), where larger $n$ reduces both sampling variability and the impact of nuisance-estimation error under cross-fitting.

\begin{figure*}[tb]
\vskip 0.2in
\begin{center}
\centerline{\includegraphics[width=\columnwidth]{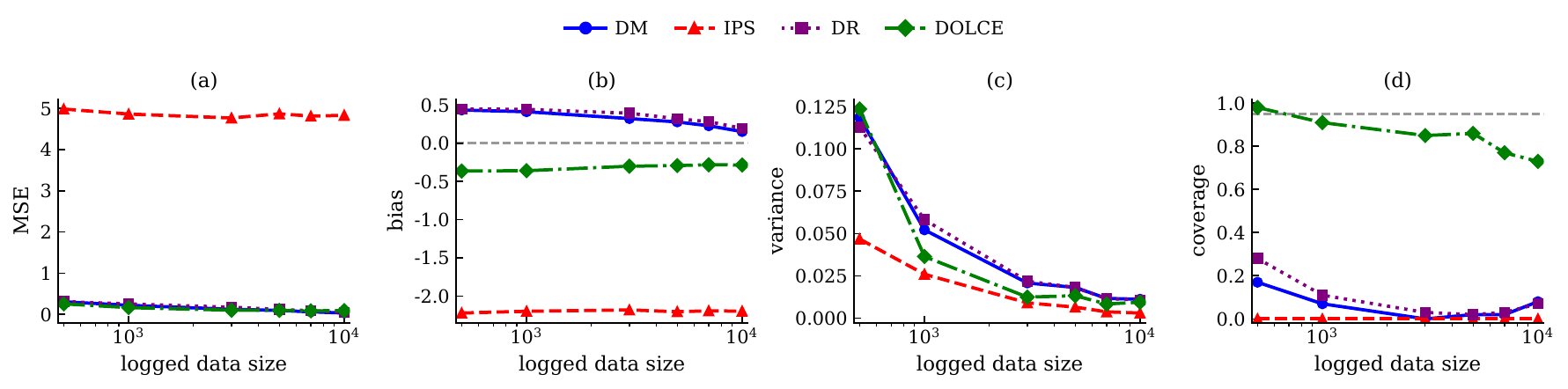}}
\caption{
Off-policy evaluation under support violation: sensitivity to the logged sample size $n$.
(a) Mean squared error (MSE) of the estimated policy value $\hat{V}(\pi)$ as a function of $n$.
(b) Bias of $\hat{V}(\pi)$ versus $n$, where the horizontal gray dashed line corresponds to zero bias.
(c) Variance of $\hat{V}(\pi)$ versus $n$.
(d) Empirical coverage of nominal $95\%$ confidence intervals for $V(\pi)$ as a function of $n$, where the horizontal gray dashed line denotes the nominal $95\%$ coverage level.
}
\label{fig:app:ope-data}
\end{center}
\vskip -0.2in
\end{figure*}

Figure~\ref{fig:app:ope-eta} shows that all methods exhibit no substantial qualitative change across the tested $\eta$ values.
Notably, DOLCE's bias stays around a nearly constant level (approximately $-0.4$ in our runs) rather than worsening with $\eta$.
In principle, increasing $\eta$ strengthens the non-additive interaction term and should make residual invariance harder to satisfy, so Proposition~\ref{prop:bias-dolce-ope} suggests that bias could increase with $\eta$;
the observed flat bias curve therefore indicates that (in this configuration) the interaction term has limited average leverage under the target policy and/or is largely absorbed by the MTRI projection, with the remaining bias dominated by other finite-sample effects such as weight and nuisance estimation.

\begin{figure*}[tb]
\vskip 0.2in
\begin{center}
\centerline{\includegraphics[width=\columnwidth]{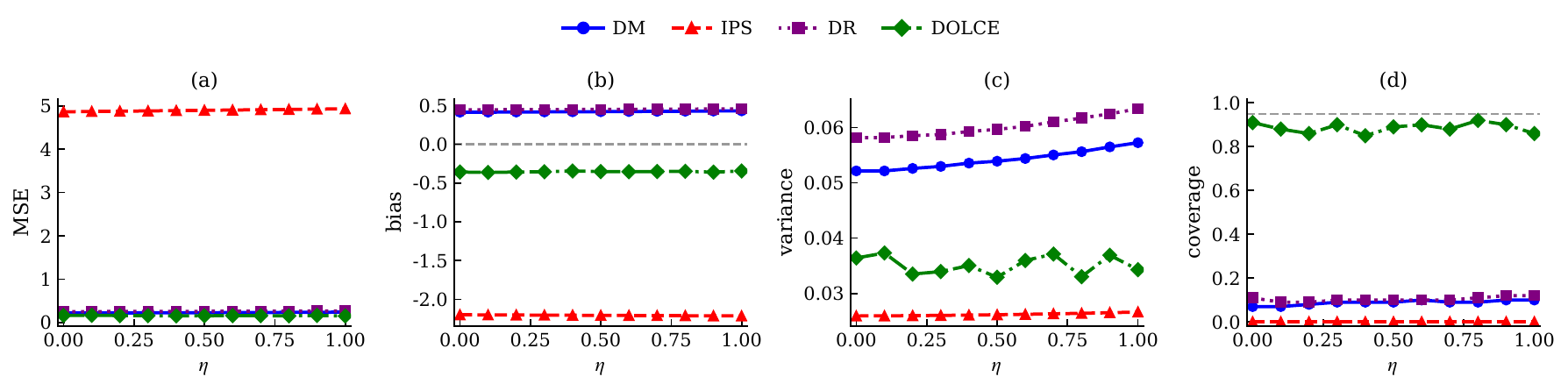}}
\caption{
Off-policy evaluation under support violation: sensitivity to the interaction strength $\eta$.
(a) Mean squared error (MSE) of the estimated policy value $\hat{V}(\pi)$ as a function of $\eta$.
(b) Bias of $\hat{V}(\pi)$ versus $\eta$, where the horizontal gray dashed line corresponds to zero bias.
(c) Variance of $\hat{V}(\pi)$ versus $\eta$.
(d) Empirical coverage of nominal $95\%$ confidence intervals for $V(\pi)$ as a function of $\eta$, where the horizontal gray dashed line denotes the nominal $95\%$ coverage level.
}
\label{fig:app:ope-eta}
\end{center}
\vskip -0.2in
\end{figure*}

\paragraph{Sensitivity for OPL.}
\label{app:subsubsec:syn-opl-sensitivity}
We next examine sensitivity of gradient-based OPL to the same factors (excluding $n$ in our current sweep), reporting normalized improvement (NI), one-step improvement (OSI), and regret.
Unless otherwise stated, other parameters are fixed to the default setting.

Figure~\ref{fig:app:opl-lambda} shows that DR slightly outperforms DOLCE in terms of normalized improvement and regret, whereas DOLCE achieves a slightly larger one-step improvement.
This suggests that DOLCE provides a marginally better update direction at initialization (OSI), even if the final learned policy quality (NI/regret) can be slightly better for DR under these settings.
This pattern is consistent with the interpretation of OSI as a first-order diagnostic of gradient alignment: when lag-based bias cancellation improves the gradient direction, OSI can increase even if multi-step optimization performance is influenced by additional factors such as gradient variance, step-size sensitivity, and accumulated estimation error.

\begin{figure*}[tb]
\vskip 0.2in
\begin{center}
\centerline{\includegraphics[width=\columnwidth]{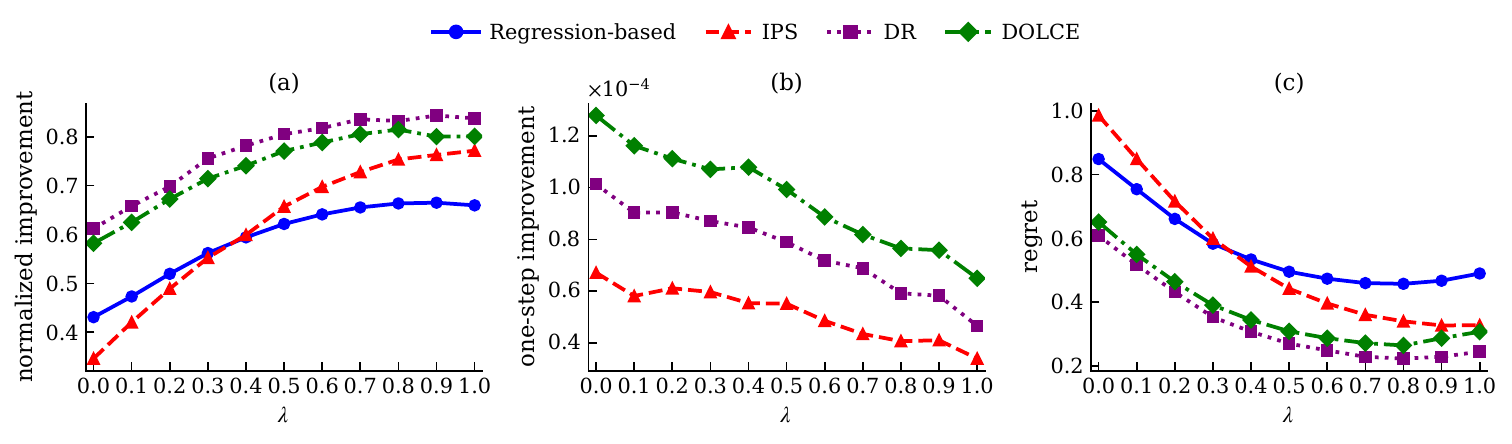}}
\caption{
Off-policy learning under support violation: sensitivity to the mixture parameter $\lambda$.
(a) Normalized improvement over the logging policy as a function of $\lambda$.
(b) One-step improvement versus $\lambda$, reflecting the alignment between the policy update direction and the true value gradient.
(c) Regret as a function of $\lambda$.
}
\label{fig:app:opl-lambda}
\end{center}
\vskip -0.2in
\end{figure*}

Figure~\ref{fig:app:opl-action} shows a broadly similar relationship among methods as in the $\lambda$ sweep, while the overall magnitude of improvement decreases as $|\mathcal{A}|$ grows.
This indicates a limitation induced by larger action spaces, where both value estimation and gradient estimation become harder and the learned policy gains diminish.
This degradation is expected: increasing $|\mathcal{A}|$ typically increases estimation variance (and the optimization difficulty) because probability mass is spread more thinly across actions and informative samples for each action become scarcer.

\begin{figure*}[tb]
\vskip 0.2in
\begin{center}
\centerline{\includegraphics[width=\columnwidth]{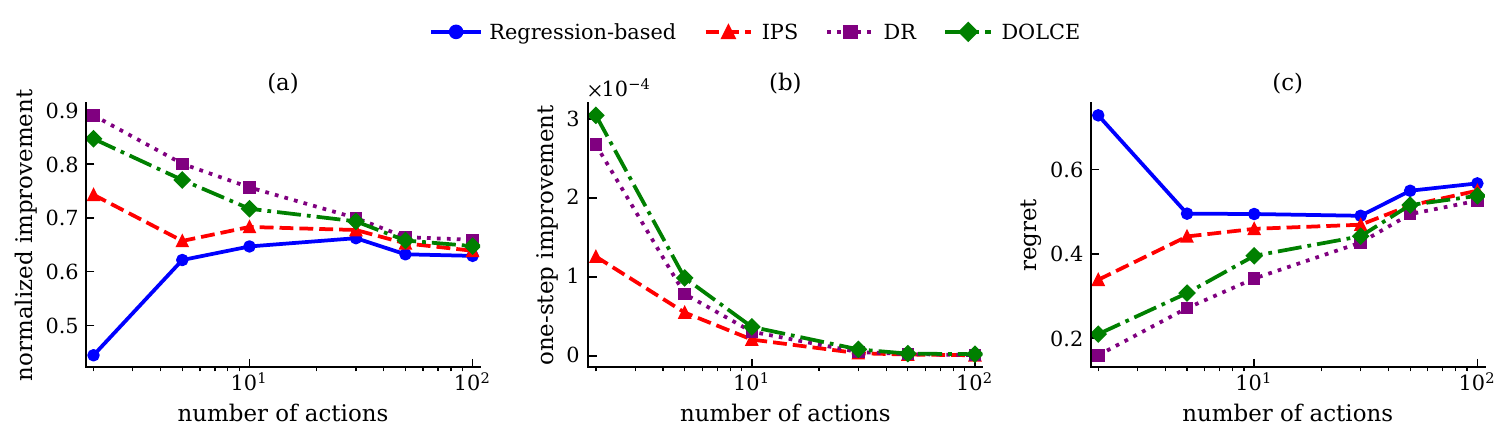}}
\caption{
Off-policy learning under support violation: sensitivity to the number of actions $|\mathcal{A}|$.
(a) Normalized improvement over the logging policy as a function of $|\mathcal{A}|$.
(b) One-step improvement versus $|\mathcal{A}|$, reflecting the alignment between the policy update direction and the true value gradient.
(c) Regret as a function of $|\mathcal{A}|$.
}
\label{fig:app:opl-action}
\end{center}
\vskip -0.2in
\end{figure*}

Figure~\ref{fig:app:opl-data} varies the logged data size $n$.
We observe qualitatively similar patterns to Figures~\ref{fig:app:opl-action} and \ref{fig:app:opl-eta}:
the relative ordering among methods remains largely unchanged, with DR being slightly favorable in terms of normalized improvement and regret in this setting, while DOLCE often yields comparable performance and a competitive (sometimes slightly better) one-step improvement, suggesting a reasonably well-aligned gradient direction under support violation.
As expected, increasing $n$ primarily reduces estimation noise, leading to modestly more stable OPL outcomes across all methods.

\begin{figure*}[tb]
\vskip 0.2in
\begin{center}
\centerline{\includegraphics[width=\columnwidth]{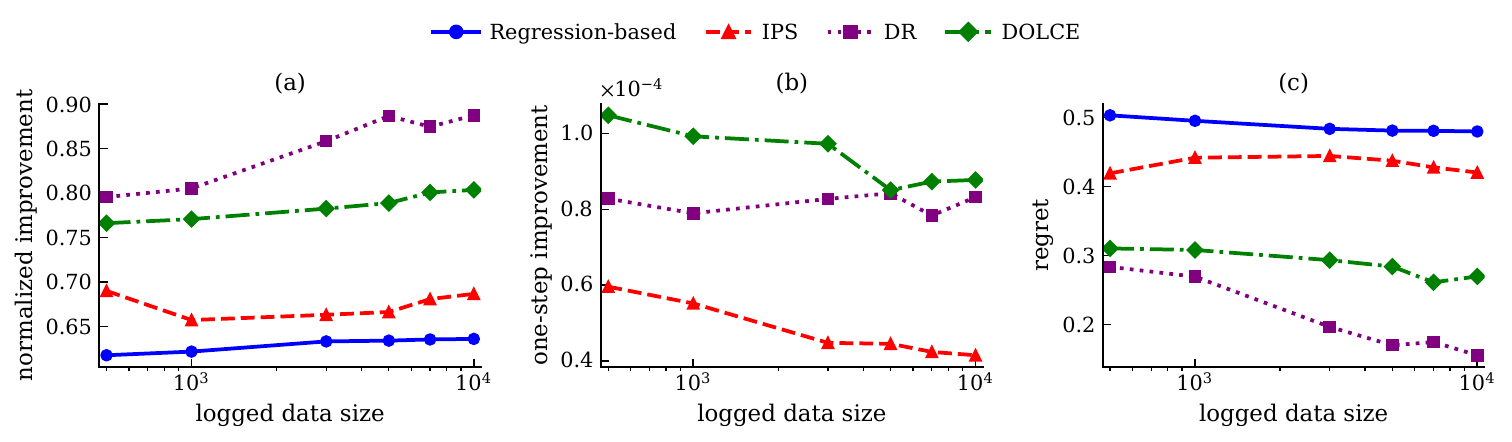}}
\caption{
Off-policy learning under support violation: sensitivity to logged data size.
(a) Normalized improvement over the logging policy as a function of the logged data size $n$.
(b) One-step improvement versus $n$, reflecting the alignment between the policy update direction and the true value gradient.
(c) Regret as $n$ increases.
}
\label{fig:app:opl-data}
\end{center}
\vskip -0.2in
\end{figure*}

Figure~\ref{fig:app:opl-eta} shows that the relative ranking among methods is qualitatively similar to the $\lambda$ sweep, while increasing $\eta$ leads to a mild deterioration in both normalized improvement and regret.
This is consistent with the role of $\eta$ in strengthening current--lag interactions that violate residual invariance, which can reduce the effectiveness of lag-based bias cancellation in the gradient estimate and hence slightly degrade OPL performance.

\begin{figure*}[tb]
\vskip 0.2in
\begin{center}
\centerline{\includegraphics[width=\columnwidth]{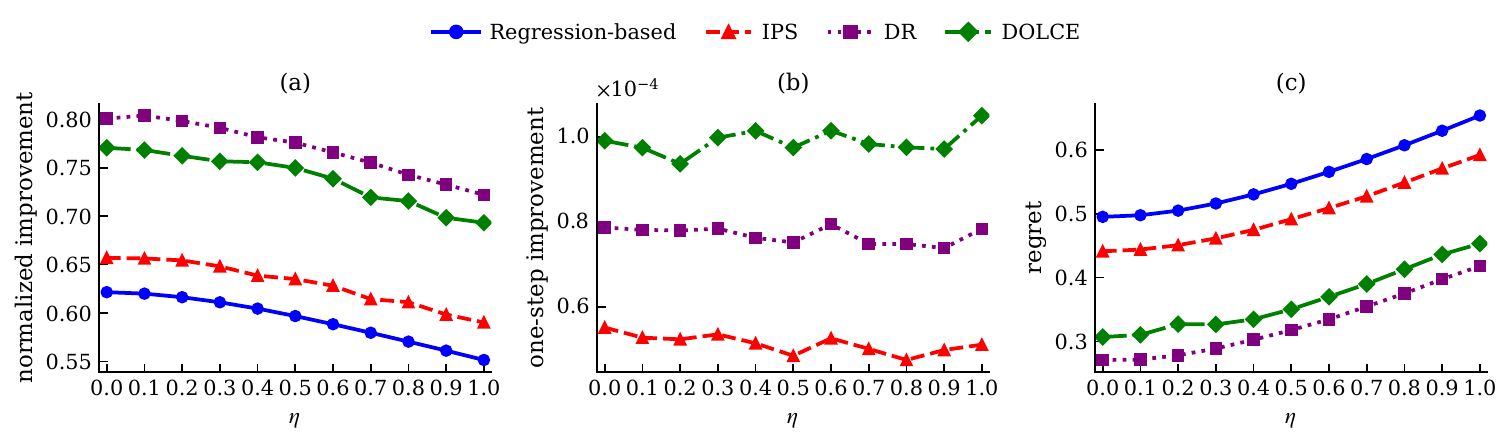}}
\caption{
Off-policy learning under support violation: sensitivity to the interaction strength $\eta$.
(a) Normalized improvement over the logging policy as a function of $\eta$.
(b) One-step improvement versus $\eta$, reflecting the alignment between the policy update direction and the true value gradient.
(c) Regret as a function of $\eta$.
}
\label{fig:app:opl-eta}
\end{center}
\vskip -0.2in
\end{figure*}

\subsection{Real-World Data Experiments}
\label{app:subsec:realworld-nwicu}

This subsection details the NWICU preprocessing pipeline used to construct the lagged contextual bandit dataset for Section~\ref{subsec:real-experiments}.

\paragraph{Data sources.}
We use three NWICU tables:
(i) ICU stay boundaries (icustays) containing \texttt{stay\_id}, \texttt{intime}, and \texttt{outtime};
(ii) bedside charting events (chartevents) containing time-stamped vitals; and
(iii) medication administration records (eMAR) containing time-stamped medication names and admission identifiers.
Because eMAR does not always include \texttt{stay\_id}, we first map eMAR events to ICU stays by joining on \texttt{hadm\_id} and retaining medication times within each ICU stay window.

\paragraph{Vital sign extraction and cleaning.}
We identify the item IDs for four vital signs from the dictionary table (\texttt{d\_items}) restricted to \texttt{linksto=chartevents}:
HR (label \texttt{PULSE}), SpO$_2$ (label \texttt{PULSE OXIMETRY}), SBP (label \texttt{BP SYSTOLIC}), and DBP (label \texttt{BP DIASTOLIC}).
We then read \texttt{chartevents} in large chunks, keep only rows whose \texttt{itemid} matches the selected vital sign IDs, parse timestamps, and coerce numeric values.
We apply physiologically plausible range filters (e.g., HR in [20,250], SpO$_2$ in [50,100], SBP in [40,300], DBP in [20,200]) and drop rows with missing \texttt{stay\_id}, timestamp, or value.
Finally, we keep only charting events that occur within \texttt{intime} and \texttt{outtime} of the corresponding ICU stay.

\paragraph{Hourly decision grid and lagged contexts.}
For each ICU stay, we construct an hourly decision grid with step size $\Delta t=1$ hour.
To ensure that lag features and next-hour reward are well-defined, we require a minimum history window and a one-hour reward horizon:
decision times begin after the minimum history offset and end one hour before ICU discharge.
At each decision time $t$, we compute the current vital sign values using a last-observation-carried-forward rule:
for each variable (HR, SpO$_2$, SBP, DBP), we take the most recent measurement at or before $t$ within a 2-hour tolerance.
We then compute $\mathrm{MAP}_t=(\mathrm{SBP}_t + 2\mathrm{DBP}_t)/3$.

We construct lag contexts at $\{1\mathrm{h},2\mathrm{h},4\mathrm{h}\}$ by shifting the hourly grid within each stay.
We drop any decision times that do not admit all required lagged covariates.

\paragraph{Action definition (vasopressor administration).}
We define a binary action $A_t\in\{0,1\}$ indicating whether any vasopressor medication is administered within the decision window $[t,t+1\mathrm{h})$.
We identify vasopressor administrations from eMAR medication names via a case-insensitive regular expression that matches common vasopressor strings (e.g., norepinephrine, epinephrine, vasopressin, phenylephrine, dopamine).
We assign each medication event to the most recent decision time $t$ within the same stay and set $A_t=1$ if at least one vasopressor event falls in $[t,t+1\mathrm{h})$; otherwise $A_t=0$.

\paragraph{Reward definition (next-hour MAP threshold).}
We define the reward as a one-hour-ahead hemodynamic stability proxy:
$R_t=\mathbf{1}\{\mathrm{MAP}_{t+1\mathrm{h}}\ge 65\}$.
Operationally, $\mathrm{MAP}_{t+1\mathrm{h}}$ is computed on the hourly grid using the same carry-forward procedure, and we drop terminal decision times that do not have a next-hour MAP.

\paragraph{Subsampling to approximate i.i.d.\ bandit draws.}
Because multiple decision points per ICU stay induce within-stay dependence, we subsample decision points to better match the i.i.d.\ assumption used in our theory.
Concretely, we optionally cap the maximum number of decision points per stay (by selecting a contiguous block of fixed length), and then sample a single decision point per stay uniformly at random among the retained points.
This results in one contextual bandit tuple per ICU stay.

\paragraph{Train/evaluation split and nuisance estimation.}
We split the resulting dataset into train/evaluation sets at the ICU-stay level to avoid leakage across the same stay.
We estimate the logging propensity $\pi_0(A\mid X)$ using a regularized logistic regression model on standardized current features.
For DM/DR, we estimate reward models $\hat{q}(x,a)$ using an action-wise multilayer perceptron with cross-fitting.
For DOLCE, we construct lag-specific reward models $\hat{q}_k(x,x^{(k)},a)$ using residual-invariance-friendly additive modeling and compute ALC scores to softmin-weight multiple candidate lags, as described in Section~\ref{sec:proposed-method}.

\subsubsection{OPL: Full Cross-Evaluation Table}
\label{app:subsec:realworld-nwicu:opl-full}

Table~\ref{tab:nwicu-opl-full} reports the value of each learned policy evaluated by each OPE estimator on the held-out split.
We include this table primarily as a diagnostic: large discrepancies across estimators (and out-of-range IPS estimates for bounded rewards) indicate sensitivity to limited overlap and estimator instability.

\begin{table}[t]
\centering
\caption{
Full cross-evaluation of learned policies on NWICU.
Each row evaluates a learned policy using an OPE estimator.
}
\label{tab:nwicu-opl-full}
\begin{tabular}{llrrrr}
\toprule
Learned policy & OPE estimator & $\widehat{V}$ & SE & CI$_{\mathrm{low}}$ & CI$_{\mathrm{high}}$ \\
\midrule
$\hat{\pi}_{\mathrm{DM}}$    & DM    & 0.942 & 0.002 & 0.938 & 0.946 \\
$\hat{\pi}_{\mathrm{DM}}$    & IPS   & 0.996 & 0.076 & 0.846 & 1.146 \\
$\hat{\pi}_{\mathrm{DM}}$    & DR    & 0.895 & 0.034 & 0.828 & 0.961 \\
$\hat{\pi}_{\mathrm{DM}}$    & DOLCE & 0.944 & 0.019 & 0.906 & 0.981 \\
\midrule
$\hat{\pi}_{\mathrm{IPS}}$   & DM    & 0.963 & 0.002 & 0.959 & 0.967 \\
$\hat{\pi}_{\mathrm{IPS}}$   & IPS   & 1.074 & 0.095 & 0.888 & 1.260 \\
$\hat{\pi}_{\mathrm{IPS}}$   & DR    & 0.888 & 0.047 & 0.795 & 0.981 \\
$\hat{\pi}_{\mathrm{IPS}}$   & DOLCE & 0.946 & 0.010 & 0.926 & 0.967 \\
\midrule
$\hat{\pi}_{\mathrm{DR}}$    & DM    & 0.955 & 0.002 & 0.951 & 0.958 \\
$\hat{\pi}_{\mathrm{DR}}$    & IPS   & 0.989 & 0.035 & 0.921 & 1.056 \\
$\hat{\pi}_{\mathrm{DR}}$    & DR    & 0.934 & 0.018 & 0.898 & 0.971 \\
$\hat{\pi}_{\mathrm{DR}}$    & DOLCE & 0.953 & 0.004 & 0.945 & 0.962 \\
\midrule
$\hat{\pi}_{\mathrm{DOLCE}}$ & DM    & 0.954 & 0.002 & 0.951 & 0.958 \\
$\hat{\pi}_{\mathrm{DOLCE}}$ & IPS   & 1.075 & 0.102 & 0.875 & 1.276 \\
$\hat{\pi}_{\mathrm{DOLCE}}$ & DR    & 0.886 & 0.052 & 0.784 & 0.989 \\
$\hat{\pi}_{\mathrm{DOLCE}}$ & DOLCE & 0.949 & 0.006 & 0.937 & 0.962 \\
\bottomrule
\end{tabular}
\end{table}

\end{document}